\ificcvfinal\pagestyle{empty}\fi
\begin{document}
\def\Blue{\color{blue}}
\def\Purple{\color{purple}}

\def\A{{\bf A}}
\def\a{{\bf a}}
\def\B{{\bf B}}
\def\b{{\bf b}}
\def\C{{\bf C}}
\def\c{{\bf c}}
\def\D{{\bf D}}
\def\d{{\bf d}}
\def\E{{\bf E}}
\def\e{{\bf e}}
\def\f{{\bf f}}
\def\F{{\bf F}}
\def\K{{\bf K}}
\def\k{{\bf k}}
\def\L{{\bf L}}
\def\H{{\bf H}}
\def\h{{\bf h}}
\def\G{{\bf G}}
\def\g{{\bf g}}
\def\I{{\bf I}}
\def\R{{\bf R}}
\def\X{{\bf X}}
\def\Y{{\bf Y}}
\def\OO{{\bf O}}
\def\oo{{\bf o}}
\def\P{{\bf P}}
\def\Q{{\bf Q}}
\def\r{{\bf r}}
\def\s{{\bf s}}
\def\S{{\bf S}}
\def\t{{\bf t}}
\def\T{{\bf T}}
\def\x{{\bf x}}
\def\y{{\bf y}}
\def\z{{\bf z}}
\def\Z{{\bf Z}}
\def\M{{\bf M}}
\def\m{{\bf m}}
\def\n{{\bf n}}
\def\U{{\bf U}}
\def\u{{\bf u}}
\def\V{{\bf V}}
\def\v{{\bf v}}
\def\W{{\bf W}}
\def\w{{\bf w}}
\def\0{{\bf 0}}
\def\1{{\bf 1}}
\def\N{{\bf N}}

\def\AM{{\mathcal A}}
\def\EM{{\mathcal E}}
\def\FM{{\mathcal F}}
\def\TM{{\mathcal T}}
\def\UM{{\mathcal U}}
\def\XM{{\mathcal X}}
\def\YM{{\mathcal Y}}
\def\NM{{\mathcal N}}
\def\OM{{\mathcal O}}
\def\IM{{\mathcal I}}
\def\GM{{\mathcal G}}
\def\PM{{\mathcal P}}
\def\LM{{\mathcal L}}
\def\MM{{\mathcal M}}
\def\DM{{\mathcal D}}
\def\SM{{\mathcal S}}
\def\RB{{\mathbb R}}
\def\EB{{\mathbb E}}

\def\tx{\tilde{\bf x}}
\def\ty{\tilde{\bf y}}
\def\tz{\tilde{\bf z}}
\def\hd{\hat{d}}
\def\HD{\hat{\bf D}}
\def\hx{\hat{\bf x}}
\def\hR{\hat{R}}

\def\Ome{\mbox{\boldmath$\omega$\unboldmath}}
\def\bet{\mbox{\boldmath$\beta$\unboldmath}}
\def\et{\mbox{\boldmath$\eta$\unboldmath}}
\def\ep{\mbox{\boldmath$\epsilon$\unboldmath}}
\def\ph{\mbox{\boldmath$\phi$\unboldmath}}
\def\Pii{\mbox{\boldmath$\Pi$\unboldmath}}
\def\pii{\mbox{\boldmath$\pi$\unboldmath}}
\def\Ph{\mbox{\boldmath$\Phi$\unboldmath}}
\def\Ps{\mbox{\boldmath$\Psi$\unboldmath}}
\def\pss{\mbox{\boldmath$\psi$\unboldmath}}
\def\tha{\mbox{\boldmath$\theta$\unboldmath}}
\def\Tha{\mbox{\boldmath$\Theta$\unboldmath}}
\def\muu{\mbox{\boldmath$\mu$\unboldmath}}
\def\Si{\mbox{\boldmath$\Sigma$\unboldmath}}
\def\Gam{\mbox{\boldmath$\Gamma$\unboldmath}}
\def\gamm{\mbox{\boldmath$\gamma$\unboldmath}}
\def\Lam{\mbox{\boldmath$\Lambda$\unboldmath}}
\def\De{\mbox{\boldmath$\Delta$\unboldmath}}
\def\vps{\mbox{\boldmath$\varepsilon$\unboldmath}}
\def\Up{\mbox{\boldmath$\Upsilon$\unboldmath}}
\def\Lap{\mbox{\boldmath$\LM$\unboldmath}}

\def\tr{\mathrm{tr}}
\def\etr{\mathrm{etr}}
\def\etal{{\em et al.\/}\,}
\newcommand{\indep}{{\;\bot\!\!\!\!\!\!\bot\;}}
\def\argmax{\mathop{\rm argmax}}
\def\argmin{\mathop{\rm argmin}}
\def\vec{\text{vec}}
\def\cov{\text{cov}}
\def\dg{\text{diag}}

\newcommand{\tabref}[1]{Table~\ref{#1}}
\newcommand{\secref}[1]{Sec.~\ref{#1}}
\newcommand{\figref}[1]{Fig.~\ref{#1}}
\newcommand{\lemref}[1]{Lemma~\ref{#1}}
\newcommand{\thmref}[1]{Theorem~\ref{#1}}
\newcommand{\clmref}[1]{Claim~\ref{#1}}
\newcommand{\crlref}[1]{Corollary~\ref{#1}}
\newcommand{\eqnref}[1]{Eqn.~\ref{#1}}

\newtheorem{remark}{Remark}
\newtheorem{theorem}{Theorem}
\newtheorem{lemma}{Lemma}
\newtheorem{definition}{Definition}

\newtheorem{proposition}{Proposition}

\title{Adversarial Attacks are Reversible with Natural Supervision}

\author{Chengzhi Mao$^1$, Mia Chiquier$^1$, Hao Wang$^2$, Junfeng Yang$^1$, Carl Vondrick$^1$\\
	$^1$Columbia University, $^2$Rutgers University\\
	{\tt\small \{mcz, mia.chiquier, junfeng, vondrick\}@cs.columbia.edu, hoguewang@gmail.com}}

\maketitle
\ificcvfinal\thispagestyle{empty}\fi

\begin{abstract}

We find that images contain intrinsic structure that enables the reversal of many adversarial attacks. Attack vectors cause not only image classifiers to fail, but also collaterally disrupt incidental structure in the image. We demonstrate that modifying the attacked image to restore the natural structure will reverse many types of attacks, providing a defense. Experiments demonstrate significantly improved robustness for several state-of-the-art models across the CIFAR-10, CIFAR-100, SVHN, and ImageNet datasets.
Our results show that our defense is still effective even if the attacker is aware of the defense mechanism. Since our defense is deployed during inference instead of training, it is compatible with pre-trained networks as well as most other defenses. 
Our results suggest deep networks are vulnerable to adversarial examples partly because their representations do not enforce the natural structure of images.

\end{abstract}

\section{Introduction}

Deep networks achieve strong performance over a number of computer vision tasks, yet they remain brittle under adversarial attacks \cite{AA, CW, mim, intriguing}.
With crafted perturbations, attackers can undermine predictions from the state-of-the-art models by changing the features in the representation \cite{TLA}.  These limitations prevent application of deep networks to sensitive and safety-critical applications \cite{Pei_2017, facenet, pointpillar, waymood}, underscoring the gap between current machine learning algorithms and human-level abilities~\cite{EvalAdvRob}.

A large body of work has studied how to \emph{train} deep networks such that they are robust to adversarial attacks. Adversarial training and its variants  \cite{madry, TLA, intriguing, pang2020bag, rice2020overfitting}, including multitask learning \cite{Mao2020MTR, self-supervise-adv-robust} and semi-supervised learning \cite{unlabeled}, significantly improve robustness. However, while existing methods focus on improving the training algorithm, they are burdened because they need to find a single representation that \emph{also} works for all possible corruptions and attacks. Training-based defenses cannot adapt to the individual characteristics of each attack at testing-time.

In this paper, we introduce an approach for \emph{reversing} the attack process, allowing us to formulate a defense strategy that adapts to each attack during the testing phase. Just as an attacker finds the right additive perturbation to \emph{break} the input, our approach will find the right additive perturbation to \emph{repair} the input. Figure \ref{fig:attack_vis} shows our reverse attack on a poisoned ImageNet image. However, reverse attacks are more challenging to produce than standard attacks because the category label is unknown to us during testing.

\begin{figure}[t]
\centering
\includegraphics[width=\linewidth]{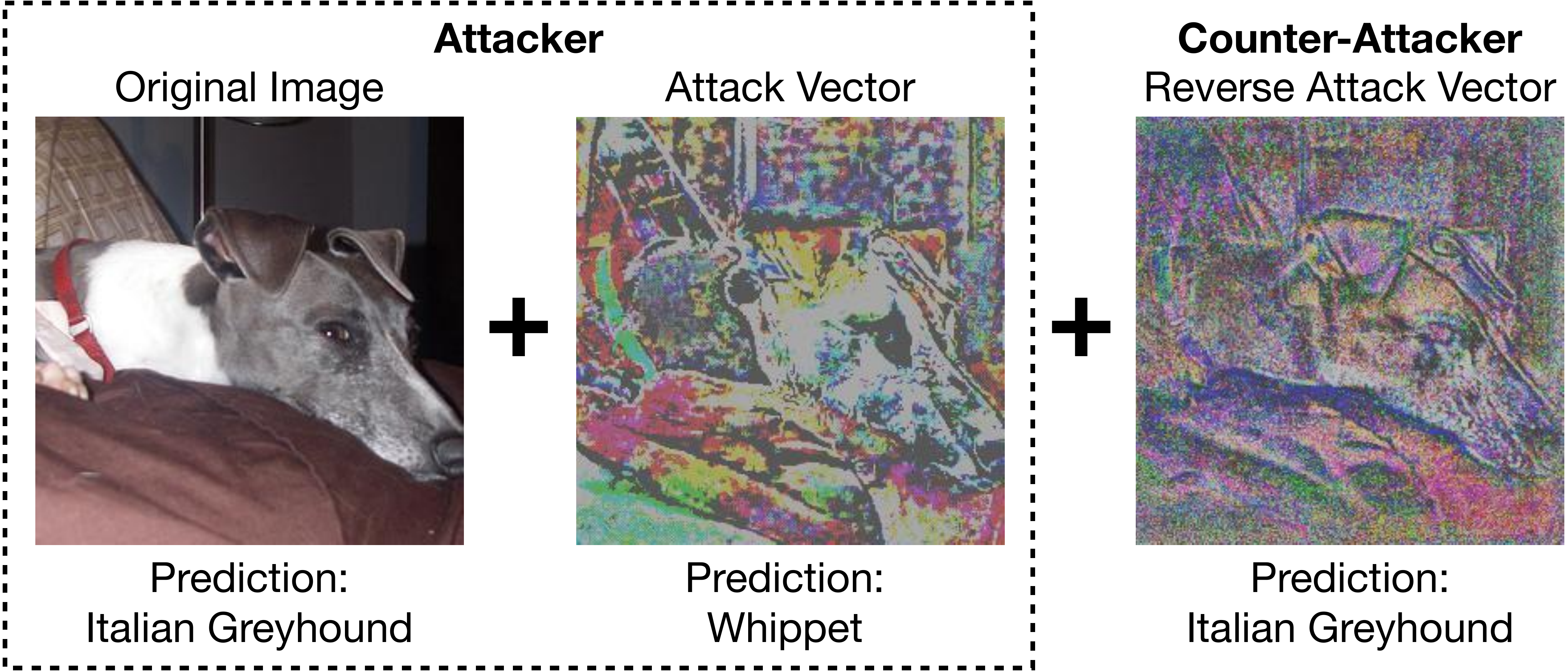}
  \caption{\textbf{Reverse Attacks:} Adversarial attacks are small perturbations that cause classification networks to fail \cite{madry, intriguing}. In this paper, we show there are intrinsic signals in natural images to reverse many types of attacks. In the right column, we visualize our reverse attack on an ImageNet image. Note that both attack vectors have been multiplied by ten for visualization purposes only.}
  \label{fig:attack_vis}
  \vspace{-1em}
\end{figure} 

Our key insight is that images contain natural and intrinsic structure that we can leverage to reverse many types of adversarial attacks. We found that, although adversarial attacks aim to fool the image classifier, they also collaterally damage self-supervised objectives. Our approach shows how to capitalize on this incidental signal in order to create adversarial defenses. By using self-supervision for defense at test time, we can guarantee that even the strongest adversary cannot manipulate the intrinsic signals that naturally come with the images, providing a more robust defense than training-based methods.

A key advantage of our framework is that it factors out the defense strategy from the visual representation.
Since reverse attacks are adaptive, this defense is able to efficiently scale to any corruption that violates the natural image manifold.  
Moreover, the modularity of our approach allows it to work with any classifier and complements existing defense models. It can also be integrated into future defense models and defend against novel attacks that corrupt natural image structures.

Visualizations, empirical experiments, and theoretical analysis show that
our reversal strategy significantly improves robust prediction for several established benchmarks and attacks. Our method advances the state-of-the-art defense methods by a large margin across four natural image datasets including CIFAR-10 (over 7.5\% gain), CIFAR-100 (over 5.5\% gain), SVHN (over 11.8\% gain), and ImageNet (over 3.0\% gain). Our method is robust against established attacks, including PGD \cite{madry}, C\&W \cite{CW}, and AutoAttack \cite{croce2020reliable}. In addition, our empirical results demonstrate that, even when the attacker is aware of our defense mechanism, our approach remains robust. 
Our models, data, and code are available at \url{https://github.com/cvlab-columbia/SelfSupDefense}.

\section{Related Work}
\begin{figure}
	\centering
	\includegraphics[width=\linewidth]{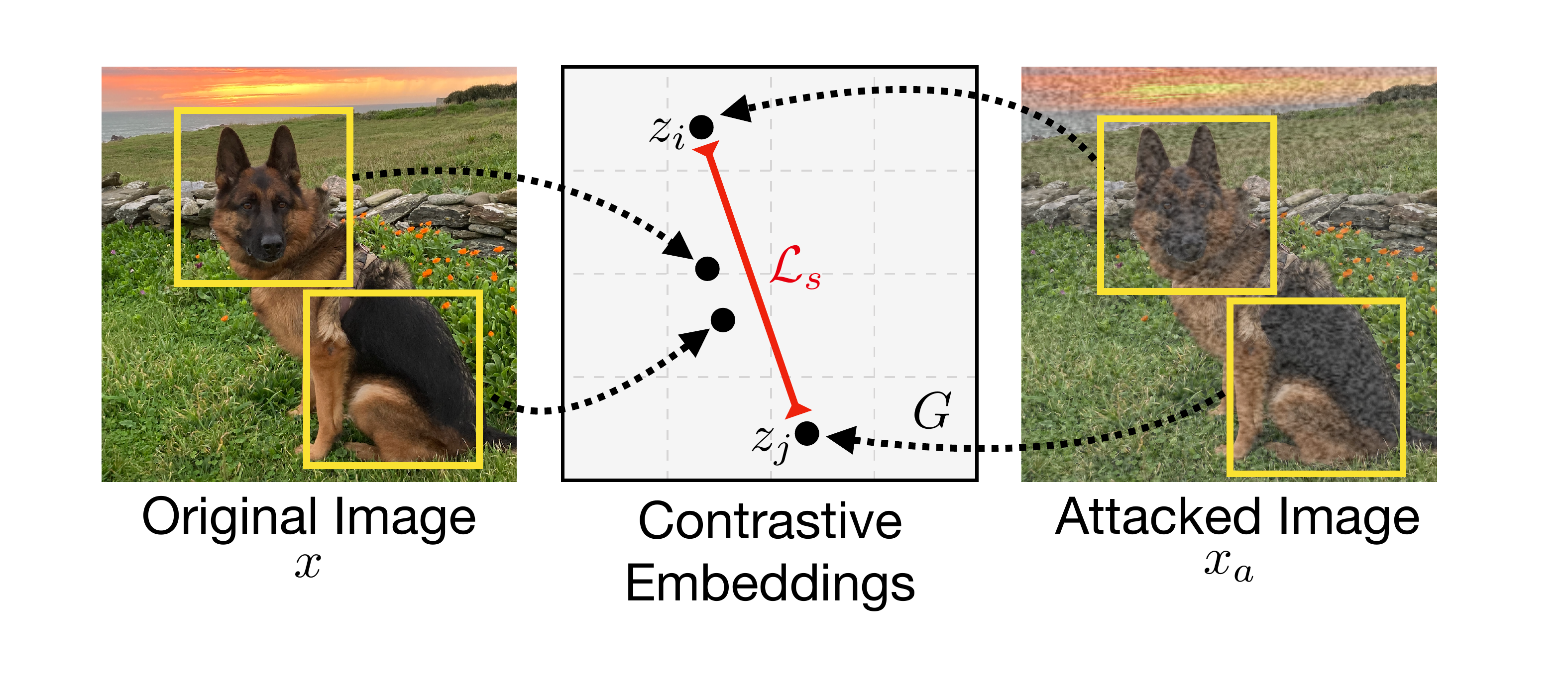}
	\vspace{-2em}
	\caption{\textbf{Defense Overview:} We find that adversarial attacks on classification networks will also collaterally attack self-supervised contrastive networks \cite{chen2020simple}. Since self-supervision is available during deployment, we exploit this discrepancy to reverse adversarial attacks and provide a defense. Our approach modifies the potentially attacked input image such that contrastive distances $\mathcal{L}_s$ are restored.}
	\vspace{-3mm}
	\label{fig:method}
\end{figure}

\textbf{Self-supervised Learning:} Natural images contain rich information for representation learning. Self-supervised learning enables us to learn high quality representations from images without annotations \cite{root_ssl, chen2020simple, zhang2016colorful, he2019moco, chen2020mocov2, SpeedNet, cluster_unsupervised, inpainting}.
By solving pretext tasks, such as jigsaw puzzles \cite{jigsaw}, image inpainting \cite{inpainting}, rotation prediction \cite{rotation}, image colorization \cite{zhang2016colorful, vondrick2018tracking}, random walk \cite{jabri2020walk},  and clustering \cite{caron2021unsupervisedcluster}, the learned representations can generalize to unseen downstream tasks such as image recognition~\cite{chen2020simple}, and also allow domain adaptation at testing time \cite{sun2020test}. Recently, contrastive learning has significantly advanced image recognition \cite{chen2020simple, he2019moco, PIRL, grill2020bootstrap}. In this paper, we leverage this incidental structure to correct adversarial attacks. Our defense uses the contrastive learning  task \cite{chen2020simple}, and it is extensible to existing self-supervised tasks as well \cite{inpainting, jigsaw, rotation}.

\begin{figure}[t]
\centering
\vspace{-1.5em}
\includegraphics[width=0.5\textwidth]{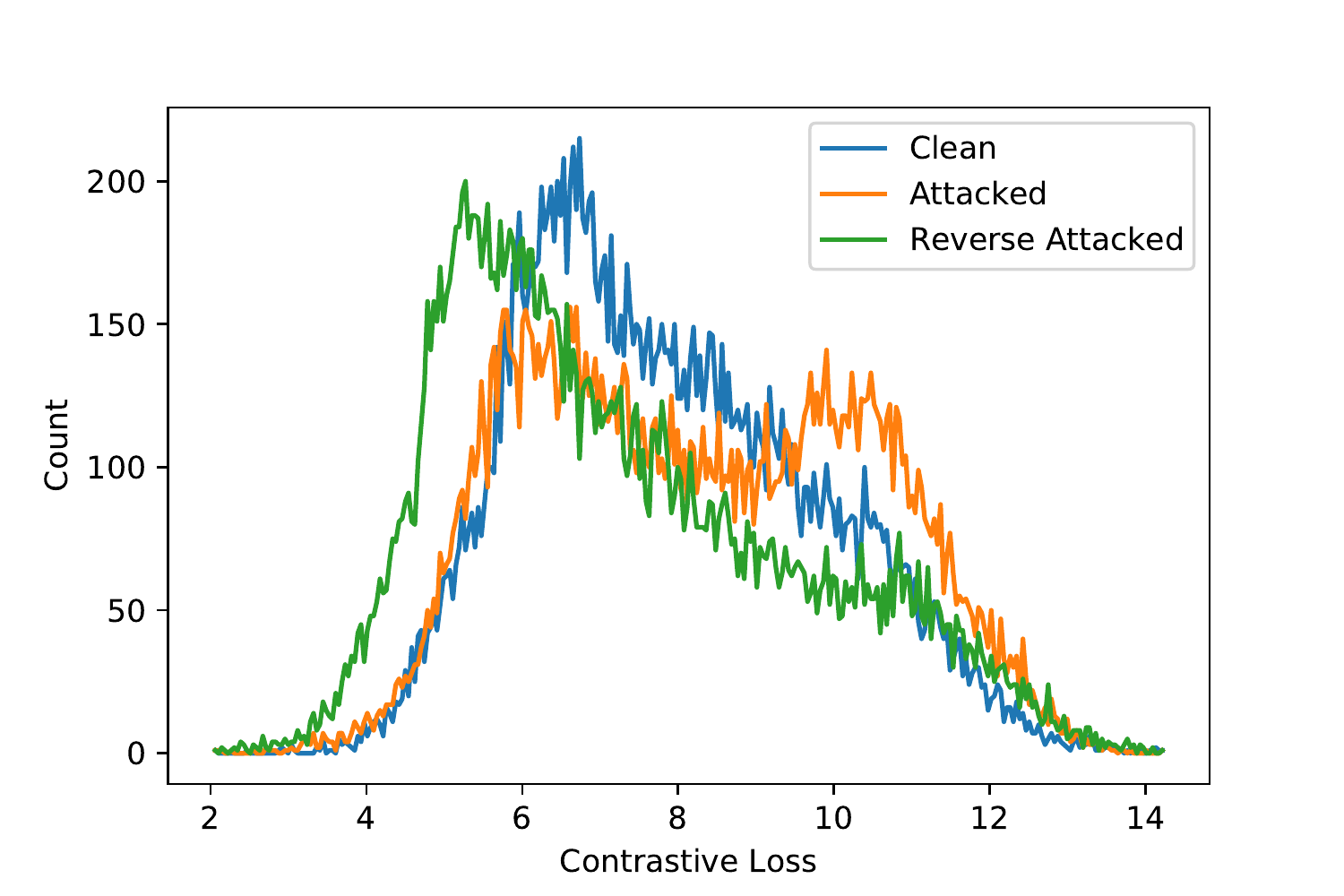}
\vspace{-1em}
  \caption{\textbf{Contrastive Score Distribution:} We histogram the constrastive loss value \cite{chen2020simple} for natural images (blue), after adversarial attack (orange), and after our reverse attack (green). This plot shows that adversarial attacks cause the contrastive loss to increase. We create a counter-attack by finding a perturbation that restores the self-supervised loss.}
  \vspace{-3mm}
  \label{fig:histogram}
\end{figure}

\textbf{Adversarial Robustness:} A large number of adversarial attacks have been proposed to fool deep models \cite{intriguing, obfuscated, CW, BIM, JSMA, moosavidezfooli2016deepfool}.  Special adversarial attacks that can be reversed to clean images are also proposed \cite{yin2021reversible}. Different from the existing approach to construct reversible attacks \cite{yin2021reversible}, our approach aims to reverse any unknown attacks for defense.  While many defense methods are proved to be not robust \cite{odds_odds, TurnWeak_fake, kwinner_fake, generative_rob_fake, fourier_fake, error_correcting_fake, ensemble_diversity_fake, transform-defense_fake, DefenseGAN_fake, SAP_fake, ther-encoding_fake} as they relied on gradient obfuscation, gradient masking \cite{obfuscated, EvalAdvRob}, and weak adaptive attack evaluation~\cite{adaptive_attacks}, adversarial training and its variants are proved to achieve the true robustness \cite{harnessing, madry, TLA, TRADES, rice2020overfitting, pang2020bag, wong2020fast, MART, AWP}. Moreover, recent progress shows that unlabeled data \cite{unlabeled, hendrycks2019pretraining} and self-supervised learning \cite{self-supervise-adv-robust} improve the robustness of deep models. While training a robust neural network to defense is vastly studied, no existing work investigates algorithms that improve robustness at inference time.

\section{Method}

We will first present a reverse attack that uses self-supervision at deployment time to defend against adversarial attacks. We then analyze the case where the attacker is aware of our defense, and show our defense remains effective. We finally provide theoretical justification for the robustness of our approach.

\subsection{Attacks and Reverse Attacks}

Let $\x$ be an input image, and $\y$ be its ground-truth category label.
To perform classification,  neural networks commonly learn to predict the category $\hat{\y}=F_{\theta}(\x) $
by optimizing the cross entropy  $H(\hat{\y}, \y)$ between the predictions and the ground truth.
The network parameters $\theta$ are estimated by minimizing the expected value of the objective:
\begin{equation}
    \mathcal{L}_c(\x, \y) = H \left(F_{\theta}(\x), \y\right),
\end{equation}
which can be optimized with gradient-based descent.

\textbf{The Attack:} In order to corrupt this model, the adversarial attack finds additive perturbations $\boldsymbol{\delta}$ to the image such that $\x_{a} = \x + \boldsymbol{\delta}$ is no longer classified correctly by the trained network $F_{\theta}$. 
Attackers create these worst-case images by maximizing the objective:
\begin{equation}
    \x_{a} = \argmax_{\x_{a}} \mathcal{L}_c(\x_{a}, \y), \quad \text{s.t.} \quad ||\x_{a}-\x||_q < \epsilon,
    \label{eq:advattack}
\end{equation}
where the $q$ norm bound of the perturbation $\boldsymbol{\delta}=\x_{a}-\x$ is less than $\epsilon$, which keeps the perturbation minimal.

\textbf{The Reverse:} 
We aim to defend against these attacks by reversing the attack process. Just as the attack finds an additive perturbation to \emph{break} the input, we will find an additive perturbation to \emph{repair} the input. However, we cannot simply flip Equation \ref{eq:advattack} from a maximization to a minimization because the category labels $\y$ are unknown at deployment.

The key observation is that self-supervised objectives are always available because they do not depend on the labels $\y$. While adversarial attacks aim to corrupt the classifier, they will also impact self-supervised representations, which is a signal we will leverage for reversal. Let $\mathcal{L}_s(\x)$ be a self-supervised objective on the input $\x$.  We create the reverse attack vector $\r$ by minimizing the objective:
\begin{equation}
    \r = \argmin_{\r} \mathcal{L}_s(\x_{a} + \r), \quad \text{s.t.} \quad ||\r||_q < \epsilon_v,
\end{equation}
where $\epsilon_v$ defines the bound of our reverse attack. The solution $\r$ will modify the adversarial image $\x_{a}$ such that it satisfies our choice of self-supervised objective.

After finding the optimal $\r$, robust prediction is straightforward. Our defense adds the resulting perturbation vector $\r$ to the input $\x_{a}$ before predicting the classification result with the normal network forward pass: $\hat{\y} =  F_{\theta}(\x_{a}+\r)$.

An advantage of reverse attacks is that, since they do not rely on offline adversarial training, the defense will generalize to unseen adversarial attacks. Moreover, our defense is able to fortify existing models without re-training.

\subsection{Natural Supervision for Defense}

While any self-supervised task~\cite{inpainting, jigsaw, rotation} can construct the loss $\mathcal{L}_s$, we use the contrastive loss as our natural supervision objective \cite{chen2020simple, chen2020mocov2}, which is a state-of-the-art self-supervised representation learning approach. The contrastive objective creates features that maximize the agreement between positive pairs of examples while minimizing the agreement between negative pairs of examples. Pairs are typically created with an augmentation strategy~\cite{chen2020simple}. In our case, when we receive a potentially adversarial image $\x$, we create the positive examples by sampling different augmentations from it to create multiple positive pairs. We create the negative pairs in a similar way, except applying augmentations to the randomly selected images.

Since these pairs are constructible at evaluation time, we create reverse attacks that minimize the term:
\begin{equation}
    \mathcal{L}_s(\x) = -\mathbb{E}_{i,j}\left[\y_{ij}^{(s)} 
    \log \frac{\exp(\mathrm{cos}(\z_i, \z_j)/\tau)}{\sum_{k}\exp(\mathrm{cos}(\z_i, \z_k)/\tau)}  
    \right],
\end{equation}
where $\z$ are the contrastive features. We use $\y_{ij}^{(s)}$ to indicate which pairs are positive and which are negative. This indicator satisfies $\y_{ij}^{(s)} = 1$ if and only if the examples $i$ and $j$ are both from $\x$, and $0$ otherwise. $\tau$ is a scalar hyper-parameter, and $\mathrm{cos}$ denotes cosine similarity.

Figure \ref{fig:histogram} shows that adversarial attacks on classification objectives also attack the contrastive objective $\mathcal{L}_s$, even though the attacker never explicitly optimizes for it. 
When there is an attack, $\mathcal{L}_s(\x_{a})$ will be larger than on clean images $\mathcal{L}_s(\x)$. This gap provides the signal for reverse attacks.

Figure \ref{fig:method} provides an overview of this defense mechanism, and 
Algorithm \ref{algorithm: SSLattack} summarizes our procedure.

\begin{algorithm}[t]
\caption{Self-supervised Reverse Attack}
\label{algorithm: SSLattack}
\begin{algorithmic}[1]
\STATE {\bfseries Input:} Potentially attacked image $\x$, step size $\eta$, number of iterations $K$, a classifier $F$, reverse attack bound $\epsilon_v$, and self-supervised loss function $\mathcal{L}_s$.
\STATE {\bfseries Output:} Class prediction $\hat{y}$
\STATE{\bfseries Inference:}

\STATE{$\x'\leftarrow \x+\n$, where $\n$ is the initial random noise}
\FOR{$k=1,...,K$}
\STATE{$\x'\leftarrow \x' - \eta \nabla_{\x'} \mathcal{L}_s(\x')$}
\STATE{$\x'\leftarrow \Pi_{(\x,\epsilon_v)} \x' $, which projects the image back into the bounded region.}
\ENDFOR
\STATE{Predict the final output by $\hat{y}=F(\x')$}

\end{algorithmic}
\end{algorithm}

\textbf{Contrastive Feature Estimation:}
To estimate the contrastive features $\z$,
we take the features before logits from a backbone $F$ and pass them to a two-layer network $G$. To compute the positive features, we sample augmentations conditioned on the input image $\x$. We follow a similar procedure to compute the contrastive features for the negative examples $\z_k$, sampling random images from a collection of images that form the negative set.

Offline, we fit the contrastive model $G$ on a large set of clean images using the same procedure as \cite{chen2020simple, chen2020mocov2}. We sequentially apply two augmentations: random cropping then scale back to the original size, and random color distortions including color jittering and random gray-scale. We found removing the Gaussian blur from the augmentations improved performance because it otherwise favored over-smooth perturbations. After $G$ is trained on clean images, we use it during reverse attacks without any further training.

\subsection{Analysis of Defense Aware Attack} 
\label{sec:defense_aware_attack}

In this section,
we analyze the effectiveness of our approach when the attacker is aware of our defense. 

\textbf{Attack Model:}
Let us assume the attacker knows the contrastive model parameters and our defense strategy. In this setting, the attacker can adversarially optimize against our defense with the following alternating optimization: 
\begin{align}
    \r &= \argmin_{\r} \mathcal{L}_s(\x+\r), \\
    \boldsymbol{\delta} &= \argmax_{\boldsymbol{\delta}} \mathcal{L}_c(\x+\r+\boldsymbol{\delta}, \y).
\end{align}
From the attacker's perspective, the above procedure is not ideal because it involves an alternating, min-max optimization. Past work suggests that this leads to unstable gradient estimation, having a gradient obfuscation problem that reduces the attack efficiency \cite{obfuscated}.

Similar to C\&W \cite{CW} and L-BFGS attack \cite{intriguing}, the attacker can reformulate the above equation as a constrained optimization problem:
\begin{equation}
\begin{aligned}
    \mathrm{maximize} \quad \mathcal{L}_c(\x_{a}, \y),    \quad \text{s.t.} \quad \mathcal{L}_s(\x_{a}) \leq \epsilon', \label{eq:opt}
\end{aligned}
\end{equation}
where $\epsilon'$ is the same value as the converged loss $\mathcal{L}_s$ for natural images. Intuitively, the attacker should  maximize the adversarial gain while respecting the self-supervised loss if they want to render our defense ineffective. 
    
To optimize Equation \ref{eq:opt}, they in practice maximize the following equation w.r.t. $\x_a$:
\begin{equation}
\label{eq:ada_attack}
    \mathcal{L}_l(\x_{a}, \y, \lambda_s)=\mathcal{L}_c(\x_{a}, \y) - \lambda_s \mathcal{L}_s(\x_{a}).
\end{equation}
We derive Equation \ref{eq:ada_attack} from Equation \ref{eq:opt} via the Lagrange Penalty method \cite{LagrangePenalty}, where $\mathcal{L}_l$ is the new loss for the adaptive attack. Full derivations are in the supplementary.

\textbf{Multi-objective Trade-off:}
The above derivation shows that the attacker can attempt to bypass our reversal by also minimizing $\mathcal{L}_s(\x_{a})$, so that attacks mimic the self-supervised features of clean examples. If the attacker produces examples that are as good as the clean examples in terms of $\mathcal{L}_s(\x_{a})$, our defense would not be able to reverse the attack by further decreasing the loss $\mathcal{L}_s(\x_{a} + \r)$. 

However, as the attacker must solve a multi-objective optimization, they must trade-off between the two objectives.
The scalar $\lambda_s$ controls how aggressively the attacker will corrupt the self-supervised model. The attacker's ideal adversarial attack will first optimize for the Pareto frontier by maximizing $\mathcal{L}_l(\x_{a}, \y, \lambda_s)$ for each $\lambda_s$.   They should select the $\lambda_s$ that yields the most damage (the lowest robust accuracy), and use the corresponding generated attack $\x_{a}^*$.

A larger $\lambda_s$ shifts the adversarial budget from attacking the classification loss $\mathcal{L}_c$ to attacking the self-supervised loss $\mathcal{L}_s$.
If the attacker is to attack the self-supervised task, they would then reduce the effectiveness of their classification attack, undermining their goal.
Attacking both $\mathcal{L}_s$ and $\mathcal{L}_c$  jointly requires creating adversarial images for multiple objectives, which is fundamentally more challenging  \cite{Mao2020MTR}.

Our defense creates a lose-lose situation for the attacker. If they ignore our defense, then we improve accuracy. If they account for our defense, then they hurt their attack.

\subsection{Theoretical Analysis}
\label{sec:theory}

We will show theoretical insights for why leveraging natural supervision improves adversarial robustness. Without our defense, the model predicts the category on an image with an incorrect estimate for the self-supervision label. With our defense, the model uses an image for which the self-supervision label is estimated correctly. We prove this increases the upper bound of the prediction accuracy.

A feed forward pass is equivalent to also including a latent self-supervised label to the model, since the information which the self-supervised network uses is in the image itself. We denote the ground-truth label of self-supervision as $\y^{(s)}$ and the predicted label of self-supervision under attack as $\y^{(s)}_a$. To make this latent label explicit in notation, we rewrite the loss functions as: $\mathcal{L}_s(\x) \rightarrow \mathcal{L}_s(\x, \y^{(s)})$ and $\mathcal{L}_s(\x_a)\rightarrow\mathcal{L}_s(\x_a, \y^{(s)}_a)$.  


\begin{lemma}
The standard classifier under adversarial attack is equivalent to predicting with $ P(\Y | \X=\x_a, \Y^{(s)} = \y^{(s)}_a)$, and our approach is equivalent to predicting with $P(\Y | \X=\x_a, \Y^{(s)} = \y^{(s)})$.
\end{lemma}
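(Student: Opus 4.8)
The plan is to read the lemma as a statement about conditional independence, resting on the observation already flagged in the text: the self-supervision label is a deterministic function of the image. Concretely, I would posit a measurable map $g$ with $\Y^{(s)} = g(\X)$, so that $\y^{(s)}_a = g(\x_a)$ and, for a clean image, $\y^{(s)} = g(\x)$. Under this model the forward pass $F_\theta(\x_a)$ is an estimator of the posterior $P(\Y \mid \X = \x_a)$, and the entire lemma reduces to rewriting this posterior in the two requested conditional forms.

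For the first claim I would argue that conditioning on a deterministic function of a variable we are already conditioning on adds nothing. Since $\Y^{(s)} = g(\X)$ is $\sigma(\X)$-measurable, the event $\{\X = \x_a\}$ is contained in $\{\Y^{(s)} = g(\x_a)\} = \{\Y^{(s)} = \y^{(s)}_a\}$, hence
\begin{equation}
P(\Y \mid \X = \x_a, \Y^{(s)} = \y^{(s)}_a) = P(\Y \mid \X = \x_a) = F_\theta(\x_a).
\end{equation}
This identifies the undefended forward pass with the first posterior using only the measurability of $g$, so I expect it to be routine bookkeeping.

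The second claim is where the actual work lies. By construction the reverse attack drives $\mathcal{L}_s(\x_a + \r)$ down to the clean level, so I would formalize its effect as $g(\x_a + \r) = \y^{(s)}$: the perturbation restores the self-supervision label to its ground-truth value. The remaining step is to argue that, apart from this restored label, $\x_a + \r$ carries the same classification-relevant content as $\x_a$, so that the defended forward pass $F_\theta(\x_a + \r) = P(\Y \mid \X = \x_a + \r)$ may be read as $P(\Y \mid \X = \x_a, \Y^{(s)} = \y^{(s)})$. I would make this precise by decomposing the image into a classification channel and a self-supervision channel and assuming $\r$ acts only on the latter, which is plausible since $\r$ is constrained to the small ball $||\r||_q < \epsilon_v$ and is computed purely from $\mathcal{L}_s$.

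The main obstacle is exactly this last modeling assumption: the second claim is genuinely an idealization, valid only insofar as the reverse perturbation repairs the self-supervision label without disturbing the features the classifier relies on. I would therefore state this channel-separation (equivalently, conditional-independence) hypothesis explicitly as the single nontrivial premise, note that it is the property \figref{fig:histogram} supports empirically, and then both equalities follow mechanically. Everything else --- the redundancy argument for the first claim and the label substitution for the second --- is routine once the deterministic map $g$ and the restoration property $g(\x_a + \r) = \y^{(s)}$ are in place.
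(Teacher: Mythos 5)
Your first claim matches the paper's own argument step for step: determinism of the self-supervision prediction gives $P(\Y^{(s)}=\y^{(s)}_a \mid \X=\x_a)=1$, so the chain-rule expansion $P(\Y\mid\X=\x_a)=\sum_{\Y^{(s)}}P(\Y^{(s)}\mid\X=\x_a)\,P(\Y\mid\Y^{(s)},\X=\x_a)$ collapses to the single term $P(\Y\mid\Y^{(s)}=\y^{(s)}_a,\X=\x_a)$. That part is correct and, as you say, routine.

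The gap is in your second claim, and it is structural rather than a missing technicality. If $\Y^{(s)}=g(\X)$ is a deterministic measurable function of the image --- the very hypothesis your first claim relies on --- then the event $\{\X=\x_a\}\cap\{\Y^{(s)}=\y^{(s)}\}$ is empty whenever $g(\x_a)=\y^{(s)}_a\neq\y^{(s)}$, so the conditional $P(\Y\mid\X=\x_a,\Y^{(s)}=\y^{(s)})$ you are trying to reach is undefined in your own model: you are conditioning on an impossible event. No channel-separation or conditional-independence assumption about what $\r$ touches can repair this, because the defect sits in the joint law of $(\X,\Y^{(s)})$, not in the action of the perturbation. The paper resolves exactly this by introducing a third random variable $\X^{(n)}$, the repaired image, with the chain $\X\to\X^{(n)}\to\Y^{(s)}$: the prior $P(\X^{(n)}=\x^{(n)}\mid\X=\x_a)$ encodes the $\epsilon_v$-ball constraint, the likelihood $P(\Y^{(s)}=\y^{(s)}\mid\X^{(n)}=\x^{(n)})$ encodes the self-supervised loss, and the reverse attack is interpreted as MAP inference, $\x^{(n)}_{\mathrm{max}}=\argmax_{\x^{(n)}}P(\x^{(n)}\mid\X=\x_a,\Y^{(s)}=\y^{(s)})$. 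The target conditional is then well defined via marginalization, $P(\Y\mid\X=\x_a,\Y^{(s)}=\y^{(s)})=\sum_{\x^{(n)}}P(\Y\mid\x^{(n)})\,P(\x^{(n)}\mid\X=\x_a,\Y^{(s)}=\y^{(s)})$, and the defended forward pass is its approximation by the MAP term. Note what this buys relative to your two extra premises: the paper needs neither exact restoration $g(\x_a+\r)=\y^{(s)}$ (the MAP objective trades off self-supervised consistency against staying near $\x_a$, so restoration may be partial) nor your channel-separation hypothesis (the single explicit approximation is replacing the sum over $\x^{(n)}$ by its MAP term, which the paper motivates by computational cost and by the observed ineffectiveness of random sampling in \figref{fig:trade}). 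To salvage your route you would have to attach $\Y^{(s)}$ to a carrier other than $\X$ itself so that the pair $(\x_a,\y^{(s)})$ becomes jointly possible --- which is precisely the $\X^{(n)}$ device --- at which point your label-substitution step turns into the paper's MAP approximation.
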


\begin{proof}
For the standard classifier under attack, we know that $P(\Y^{(s)}=\y^{(s)}_a|\X=\x_a) = 1$. Thus we know the standard classifier under adversarial attack is equivalent to
\begingroup\makeatletter\def\f@size{9}\check@mathfonts
\begin{align*}
P(\Y|\X=\x_a) &= \sum_{\Y^{(s)}} P(\Y^{(s)} | \X=\x_a) P(\Y|\Y^{(s)},\X=x_a)\\
&=P(\Y|\Y^{(s)}=\y^{(s)}_a,\X=\x_a).
\end{align*}
\endgroup
Our algorithm finds a new input image $\x^{(n)}_{\mathrm{max}}$ that
\begingroup\makeatletter\def\f@size{9}\check@mathfonts
\begin{align*}
     &\argmax_{\x^{(n)}} P(\X^{(n)}=\x^{(n)}|\X=\x_a)P(\Y^{(s)}=\y^{(s)} |\X^{(n)}=\x^{(n)}) \\
     &= \argmax_{\x^{(n)}} P(\X^{(x)}=\x^{(n)}| \X=\x_a, \Y^{(s)}=\y^{(s)}).
\end{align*}
\endgroup

Our algorithm first estimate $\x^{(n)}_{\mathrm{max}}$ with adversarial image $\x_a$ and self-supervised label $\y^{(s)}$. We then predict the label $\Y$ using our new image $\x^{(n)}_{\mathrm{max}}$. Thus, our approach in fact
 estimates $P(\Y|\X^{(n)} = \x^{(n)}_{\mathrm{max}})P(\X^{(n)} =  \x^{(n)}_{\mathrm{max}} |\X=\x_a,  \Y^{(s)} = \y^{(s)})$. Note the following holds:
\begingroup\makeatletter\def\f@size{9}\check@mathfonts
\begin{align}
&P(\Y | \X=\x_a, \Y^{(s)} = \y^{(s)})\\
&= \sum_{\x^{(n)}} P(\Y|\x^{(n)})P(\x^{(n)}|\X=\x_a,  \Y^{(s)} = \y^{(s)}) \\
&\approx P(\Y|\X^{(n)} = \x^{(n)}_{\mathrm{max}})P(\X^{(n)} =  \x^{(n)}_{\mathrm{max}} |\X=\x_a,  \Y^{(s)} = \y^{(s)})
\end{align}
\endgroup
Thus our approach is equivalent to estimating $P(\Y | \X=\x_a, \Y^{(s)} = \y^{(s)})$.
\end{proof}

We use the maximum a posteriori (MAP) estimation $\x^{(n)}_{\mathrm{max}}$ to approximate the sum over $\X^{(n)}$ because: (1) sampling a large number of $\X^{(n)}$ is computationally expensive; (2) our results in Figure~\ref{fig:trade} shows that random sampling is ineffective; (3) our MAP estimate naturally produces a denoised image that can be useful for other downstream tasks.

Next we provide theoretical guarantees that our approach can strictly improve the bound for classification accuracy in~\thmref{thm:bound}. For convenience we introduce an additional random variable $\X_a$ representing the adversarial image.

\begin{theorem}\label{thm:bound}
Assume the base classifier operates better than chance and instances in the dataset are uniformly distributed over $n$ categories. Let the prediction accuracy bounds be $P(\Y|\Y^{(s)}_a, \X_a) \in [b_1, c_1]$ and $P(\Y|\Y^{(s)}, \X_a) \in [b_2, c_2]$. If the conditional mutual information $I(\Y;\Y^{(s)}| \X_a) > 0$, we have $b_2 \geq b_1$ and $c_2 > c_1$, which means our approach strictly improves the bound for classification accuracy.
\end{theorem}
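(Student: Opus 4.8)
The plan is to reduce the entire statement to a single inequality between conditional Shannon entropies and then translate entropy into accuracy through Fano-style bounds. First I would rewrite the hypothesis $I(\Y;\Y^{(s)}\mid \X_a) > 0$ in the equivalent form $H(\Y\mid \Y^{(s)},\X_a) < H(\Y\mid \X_a)$, using the identity $I(\Y;\Y^{(s)}\mid \X_a) = H(\Y\mid \X_a) - H(\Y\mid \Y^{(s)},\X_a)$. This isolates the only place the hypothesis is used: the \emph{true} self-supervision label strictly reduces the residual uncertainty about $\Y$ once $\X_a$ is fixed.

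Next I would connect the two predictors to these entropies. By the Lemma, the attacked self-supervision label is deterministic given the adversarial image, i.e.\ $P(\Y^{(s)}=\y^{(s)}_a\mid \X=\x_a)=1$, so conditioning additionally on $\Y^{(s)}_a$ contributes no information and $H(\Y\mid \Y^{(s)}_a,\X_a) = H(\Y\mid \X_a)$. Combining this with the previous step yields the key strict inequality $H(\Y\mid \Y^{(s)},\X_a) < H(\Y\mid \Y^{(s)}_a,\X_a)$: our approach operates at strictly lower conditional entropy than the standard classifier under attack.

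Then I would turn entropy into accuracy. Under the uniform-prior assumption, Fano's inequality $H(\Y\mid Z) \le h(1-A) + (1-A)\log(n-1)$, with $h$ the binary entropy function, relates the Bayes accuracy $A$ of a predictor based on an observation $Z$ to $H(\Y\mid Z)$; inverting it expresses the largest admissible accuracy as a strictly decreasing function of $H(\Y\mid Z)$ on the non-degenerate range, which supplies the ceilings $c_1,c_2$. A complementary lower bound on accuracy (of Hellman--Raviv type, or simply the ``better than chance'' floor $1/n$ that both predictors inherit from the assumptions) supplies $b_1,b_2$ as a non-increasing function of the same entropy. Feeding the strict entropy gap from the previous step through the strictly decreasing ceiling map gives $c_2 > c_1$, while feeding it through the non-increasing floor map gives $b_2 \ge b_1$.

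The main obstacle I expect is justifying the asymmetry in strictness: why the upper bound improves strictly while the lower bound improves only weakly. This hinges on the precise form chosen for each accuracy bound. The Fano ceiling is strictly monotone in $H$ on the range where accuracy exceeds $1/n$, so a strict entropy drop forces $c_2 > c_1$; the floor, by contrast, can be pinned at a baseline (chance level $1/n$, or a flat portion of the lower-bound curve), so the same entropy drop need not strictly raise it. I would therefore concentrate the care on stating the two bounds cleanly, verifying their monotonicity on the relevant range of $H$ (using ``better than chance'' to stay off the degenerate endpoints $P_e = 0$ and $P_e = 1-1/n$), and confirming that equality in the floor is genuinely attainable, so that $\ge$ rather than $>$ is the correct claim for $b_2$ versus $b_1$.
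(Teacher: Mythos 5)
Your proposal is correct and follows essentially the same route as the paper: the paper likewise uses the chain rule to convert $I(\Y;\Y^{(s)}\mid\X_a)>0$ into a strict information gap (via $I(\Y;\Y^{(s)},\X_a) > I(\Y;\Y^{(s)}_a,\X_a) = I(\Y;\X_a)$, which is your entropy reformulation plus the determinism of $\y^{(s)}_a$ given $\x_a$), then applies Fano's inequality with the function $Q(\epsilon_p)=H(\epsilon_p)+\epsilon_p\log(n-1)$, whose strict monotonicity on the better-than-chance range $\epsilon_p < 1-\tfrac{1}{n}$ is inverted to get the ceilings $c_i = 1-Q^{-1}(H(\Y)-I(\cdot))$ and hence $c_2 > c_1$. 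Your handling of the floor ($b_2 \ge b_1$ via a non-strict argument, with equality attainable) also matches the paper, which argues that conditioning on the additional variable $\Y^{(s)}$ beyond $\X_a$ cannot degrade achievable accuracy, so only the weak inequality is claimed there.
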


\begin{proof}
If $I(\Y;\Y^{(s)}| \X=\x_a) > 0$, then it is straight-forward that:
\begin{align*}
    I(\Y;\Y^{(s)}, \X_a) > I(\Y;\Y^{(s)}_a, \X_a) = I(\Y; \X_a).
\end{align*}

We define $H(\epsilon_p)=-\epsilon_p \log \epsilon_p-(1-\epsilon_p)\log(1-\epsilon_p)$. Using the \emph{Fano's Inequality} \cite{fano} and the fact that $Q(\epsilon_p) = H(\epsilon_p) + \epsilon_p \log(n-1)$ is a monotonically increasing function when error rate $\epsilon_p < 1-\frac{1}{n}$, i.e., accuracy higher than random guessing,\footnote{The validity of this fact are explained in the supplementary.} we derive the upper bound of accuracy $c_1$ and $c_2$ to be:

\begin{align*}
1- \epsilon_p &\leq c_1 =  1 - Q^{-1}(- I(\Y;\X_a) + H(\Y)), \\
 1- \epsilon_p &\leq  c_2 =  1 - Q^{-1}(- I(\Y;\Y^{(s)}, \X_a) + H(\Y)), 
\end{align*}
where the upper bound is a function of the mutual information. Since $H(\Y)$ is a constant, a larger mutual information will strictly increase the bound. Detailed proof is in the supplementary material.
\end{proof}

Intuitively, the adversarial attack $\x_a$ will corrupt some mutual information between the label $\Y$ and natural structure $\Y^{(s)}$. Thus, there is additional mutual information between $\Y$ and $\Y^{(s)}$ given $\x_a$, i.e., $I(\Y;\Y^{(s)}| \X=\x_a) > 0$. \thmref{thm:bound} shows that by restoring information from the correct $\Y^{(s)}$, the prediction accuracy can be improved.

Theoretically, by optimizing the self-supervision loss, the defense aware attack is in fact predicting classification label given the right self-supervision label $\Y^{(s)}$, i.e., $P(\Y | \X=\x_a, \Y^{(s)} = \y^{(s)})$. According to our theory, the robust accuracy should increase due to the restored information. Overall, our defense is robust even under a defense aware adversary.

\begin{figure}[t]
\centering
\includegraphics[width=0.45\textwidth]{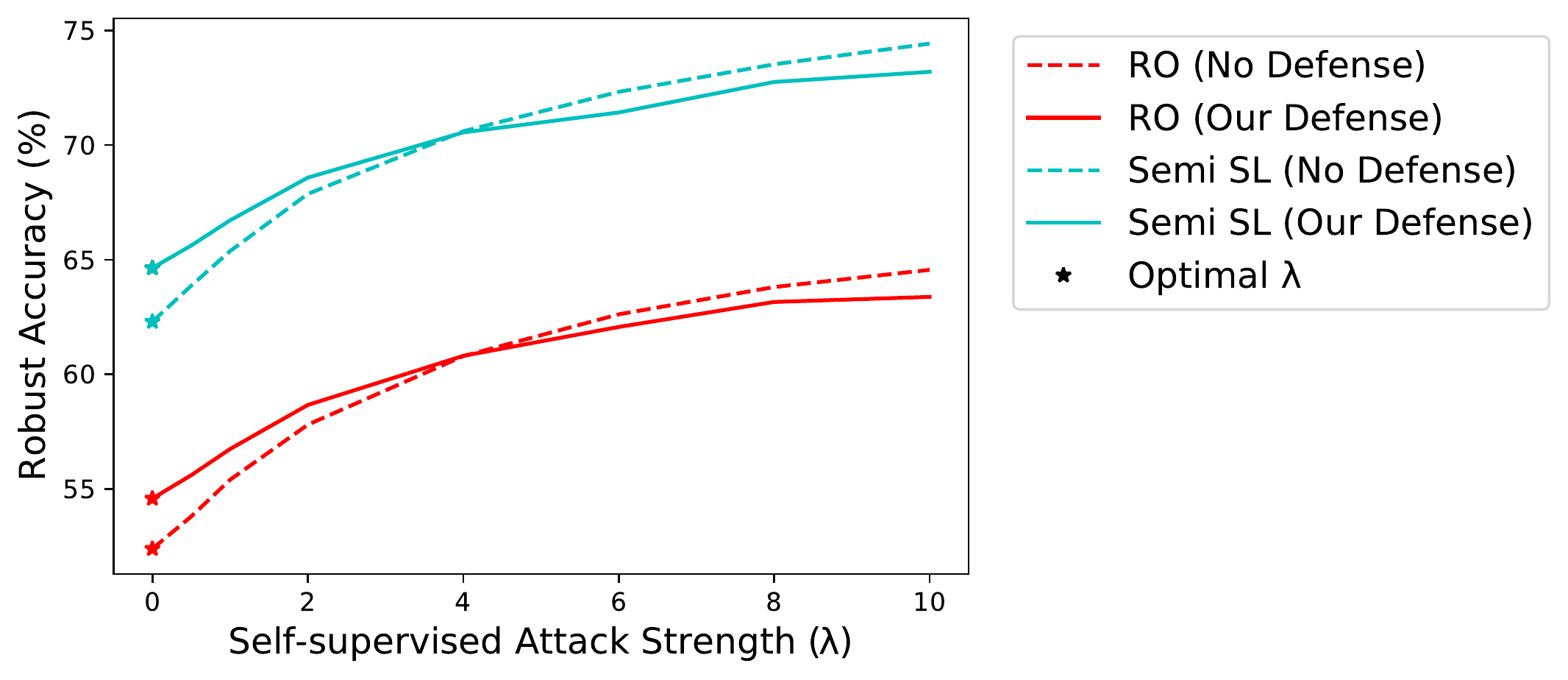}
  \caption{\textbf{The Trade-off:} The robust accuracy for defense aware attack under different $\lambda_s$ setup. We increase the $\lambda_s$ of defense aware attack from 0 to 10, and show robustness accuracy on two robust models, RO and Semi-SL, on CIFAR-10 dataset. The plot shows a trade-off. While increasing the value of $\lambda_s$ decreases the gain of Ours compared with Baseline, it also decreases the attacker's effectiveness. To achieve the best attack effectiveness, the attacker should use $\lambda_s=0$, which is the standard attack without attempting to corrupt the self-supervised task.}
  \label{fig:attackstrength}
  \vspace{-4mm}
\end{figure}

\begin{table*}[t]
	\begin{center}
		\vspace{-3mm}
		\scriptsize
		\centering
		\begin{tabular}{l|c|cc|cc|cc|cc|cc}
			\toprule
			& Model & \multicolumn{10}{c}{Adversarial Attack $L_{\infty}=8/255$ }  \\
			& Architecture & \multicolumn{2}{c|}{PGD 50}  & \multicolumn{2}{c|}{PGD 200} & \multicolumn{2}{c|}{BIM 200} & \multicolumn{2}{c|}{C\&W 200} & \multicolumn{2}{c}{AutoAttack} \\
			Inference Type & & Standard   & Ours  & Standard   & Ours & Standard   & Ours  & Standard   & Ours  & Standard   & Ours\\
			\midrule  
			
			TRADES \cite{TRADES} & WRN-34-10 & 55.05\% & \textbf{57.00\%} & 55.02\% & \textbf{57.18\%} & 55.06\% & \textbf{57.33\%} & 53.72\% & \textbf{56.56\%} & 53.16\% & \textbf{60.67\%} \\
			RO \cite{rice2020overfitting} & PreRes-18 & 52.40\% & \textbf{54.59\%} & 52.34\% & \textbf{54.62\%} & 52.32\% & \textbf{54.54\%} &  50.33\% & \textbf{53.48\%} & 48.95\% & \textbf{58.20\%}  \\
			BagT \cite{pang2020bag} & WRN-34-10 & 56.44\% & \textbf{58.33\%} & 56.40\% & \textbf{58.47\%} & 56.38\% & \textbf{58.55\%} & 54.82\% & \textbf{57.35\%} & 54.26\% & \textbf{61.70\%} \\
			MART \cite{MART} & WRN-28-10 & 62.72\% & \textbf{64.40\%} & 62.63\% & \textbf{64.26\%} & 62.54\% & \textbf{64.28\%} & 58.96\% & \textbf{62.18\%} & 57.29\% & \textbf{66.22\%}\\
			AWP \cite{AWP} & WRN-28-10 & 63.67\% & \textbf{64.21\%} & 63.64 \% & \textbf{64.07\%} & 63.64\% & \textbf{63.69\%} & 60.82\% & \textbf{61.90\%} & 60.58\% & \textbf{67.15\%} \\
			Semi-SL \cite{unlabeled} & WRN-28-10 &  62.30\% & \textbf{64.64\%} & 62.22\% & \textbf{64.44\%} & 62.18\% & \textbf{64.68\%} & 60.90\% & \textbf{63.83\%} & 60.22\% & \fbox{\textbf{67.79\%}} \\

			\bottomrule
		\end{tabular}
	\end{center}
	\vspace{-5mm}
	\caption{\small{Adversarial robust accuracy on the CIFAR-10 test set. Our method improves robustness of established work across different adversarial attack setups by over \textbf{7.5\%}. Our method (boxed) obtained \textbf{the first place on the CIFAR-10 AutoAttack leaderboard.}}} \label{tab:cifar10}
\end{table*}

\begin{table*}[t]
	\begin{center}
		\vspace{-3mm}
		\scriptsize
		\centering
		\begin{tabular}{l|c|cc|cc|cc|cc|cc}
			\toprule
			& Model& \multicolumn{10}{c}{Adversarial Attack $L_{\infty}=8/255$}  \\
			& Architecture &\multicolumn{2}{c|}{PGD 50}  & \multicolumn{2}{c|}{PGD 200} & \multicolumn{2}{c|}{BIM 200} & \multicolumn{2}{c|}{C\&W 200} & \multicolumn{2}{c}{AutoAttack}  \\
			Inference Type && Standard   & Ours  & Standard   & Ours & Standard   & Ours  & Standard   & Ours & Standard   & Ours\\
			\midrule  
			RO \cite{rice2020overfitting} & ResNet18 & 21.90\% & \textbf{23.83\%} & 20.90\% & \textbf{22.23\%} & 21.00\% & \textbf{22.09\%} & 20.42\% & \textbf{22.19\%} & 19.23\% & \textbf{25.45\%}\\
			TRADES* \cite{TRADES} & WRN34-10 & 27.04\% & \textbf{28.55\%} & 26.94\% & \textbf{28.16\%} & 26.94\% & \textbf{28.18\%} & 26.57\% & \textbf{27.88\% } & 25.55\% &\textbf{31.26\%}\\
			BagT * \cite{pang2020bag} & WRN34-10 & 29.87\% & \textbf{31.21\%} & 28.81\% & \textbf{31.15\%} & 29.81\% & \textbf{31.44\%} & 29.72\% & \textbf{31.40\%} & 27.59\% & \textbf{33.16\%} \\
			\bottomrule
		\end{tabular}
	\end{center}
	\vspace{-4mm}
	\caption{\small{Adversarial robust accuracy on the CIFAR-100 test set. Our method consistently improves robustness by over \textbf{5.5\%}.}} \label{tab:cifar100}
	\vspace{-5mm}
\end{table*}

\begin{table*}[h]
	\begin{center}
		\vspace{-3mm}
		\scriptsize
		\centering
		\begin{tabular}{l|c|cc|cc|cc|cc|cc}
			\toprule
			& Model& \multicolumn{10}{c}{Adversarial Attack $L_{\infty}=8/255$}  \\
			& Architecture & \multicolumn{2}{c|}{PGD 50}  & \multicolumn{2}{c|}{PGD 200} & \multicolumn{2}{c|}{BIM 200} & \multicolumn{2}{c|}{C\&W 200} & \multicolumn{2}{c}{AutoAttack}\\
			Inference Type & & Standard   & Ours  & Standard   & Ours & Standard   & Ours  & Standard   & Ours & Standard   & Ours\\
			\midrule  
			RO \cite{rice2020overfitting} & PreRes-18 & 51.03\% & \textbf{ 53.21\%} & 50.82\% & \textbf{53.06\%} & 50.84\% & \textbf{53.26\% } & 48.90\% & \textbf{54.62\%} & 47.15\% & \textbf{57.79\%}\\
			Semi-SL \cite{unlabeled} & WRN-28-10 & 55.69\% & \textbf{62.40\%} & 55.29\% & \textbf{62.12\%} & 55.43\% & \textbf{62.53\%} & 57.03\% & \textbf{63.34\%} & 53.61\% & \textbf{65.50\%}\\
			\bottomrule
		\end{tabular}
	\end{center}
	\vspace{-5mm}
	\caption{\small{Adversarial robust accuracy on the SVHN test set under different attacks. Our method improves robustness by over \textbf{11.8\%}.} }\label{tab:svhn}
\end{table*}

\begin{table}[h]
\begin{center}
      \vspace{-3mm}
    \centering
    \scriptsize
    \begin{tabular}{l|c|c|c|c|c}
         \toprule
         & \multicolumn{5}{c}{Adversarial Attack $L_{\infty}=4/255$}  \\
         & FGSM & PGD10 & PGD 20 & BIM 20 & C\&W 20  \\
          
         \midrule  
          FBF \cite{wong2020fast} & 32.47\%  & 28.68\% & 28.52\% & 28.49\%  & 27.81\% \\
          Ours + FBF & \textbf{33.51\%}  & \textbf{31.36\%} & \textbf{31.32\%} & \textbf{31.27\%}  & \textbf{30.88\%} \\
         \bottomrule
    \end{tabular}
\end{center}
      \vspace{-2mm}
\caption{\small{Adversarial robust accuracy on the ImageNet test set. Our method uses the same model and parameters as the baseline FBF \cite{wong2020fast}. After reversing the adversarial attack with our natural supervison, we improve robustness on ImageNet by over \textbf{3.07\%}.}} \label{tab:imagenet}
\end{table}

\begin{table}[h]
\begin{center}
      \vspace{-3mm}
    \scriptsize
    \centering
    \begin{tabular}{l|cc|cc}
         \toprule
         &  \multicolumn{4}{c}{Adversarial Attack $L_2=256/255$}  \\
         &  \multicolumn{2}{c|}{PGD 200}  & \multicolumn{2}{c}{C\&W 200} \\
          Inference Type & Standard   & Ours & Standard   & Ours  \\
         \midrule  
        TRADES \cite{TRADES} & 36.90\% & \textbf{39.16\%} & 35.89\% & \textbf{38.07\%} \\
          RO \cite{rice2020overfitting} & 40.00\% & \textbf{41.98\%} & 38.31\% & \textbf{40.17\%} \\
          BagT \cite{pang2020bag} & 38.80\% & \textbf{41.28\%} & 37.01\% & \textbf{39.20\%} \\
          Semi-SL \cite{unlabeled} & 42.07\% & \textbf{44.47\%} & 39.97\% & \textbf{42.72\%} \\
          AWP \cite{AWP} & 44.39\% & \textbf{47.15\%} & 40.97\% & \textbf{43.72\% }\\
          MART \cite{MART} & 43.24\% & \textbf{45.85\%} & \fbox{43.23\%} & \fbox{\textbf{45.85\%} }\\
         \bottomrule
    \end{tabular}
\end{center}
      \vspace{-4mm}
\caption{\small{$L_2$ norm bounded adversarial robust accuracy on the CIFAR-10. Our natural supervision is agnostic to the attack type and can improve the robustness by over 2.6\% for $L_2$ attack without retraining the defense model. The lower bound for the best achieved robustness on $L_2$ is \fbox{boxed}.}} \label{tab:l2cifar10}
\vspace{-5mm}
\end{table}

\section{Experiments}
Our experiments evaluate the robustness at image classification on four datasets: CIFAR-10 \cite{cifar10}, CIFAR-100 \cite{cifar100},  SVHN \cite{SVHN}, and ImageNet \cite{imagenet_cvpr09}. We compare with the state-of-the-art defense methods, under several strong adversarial attacks including a defense aware attack.

\subsection{Baselines}

We apply our method to seven established, scrutinized \cite{obfuscated} defense methods including the state-of-the-art adversarial robust model. All studied methods are trained with adversarial training \cite{madry}, but achieve higher robust accuracy than the initial version of Madry et al. \cite{madry}.

\textbf{TRADES \cite{TRADES}} is the winning solution for NeurIPS 2018 Adversarial Vision Challenge. It introduces a KL-divergence term to regularize the representation of adversarial examples to match the ones of clean examples.

\textbf{Robust Overfit (RO) \cite{rice2020overfitting}} re-examines the existing adversarial robust models through overfitting, which is the state-of-the-art model trained with Pre-ResNet18.

\textbf{Bag of Tricks (BagT) \cite{pang2020bag}} conducts extensive experiments on the effect of hyper-parameters on adversarial training \cite{madry}. It is the state-of-the-art adversarial robust model without additional unlabeled data for training. 

\textbf{Semi-supervised Learning (Semi-SL) \cite{unlabeled}} significantly improves adversarial robustness using unlabeled data. By training with pseudo labels of unlabeled images, the model achieved the state-of-the-art robustness. However, this work neglects the information of natural images beyond the pseudo classification label. 

\textbf{MART \cite{MART}} uses misclassification aware adversarial training to achieve improved robustness. We use its best version trained on top of Semi-SL \cite{unlabeled}.

\textbf{Adversarial Weight Perturbation (AWP) \cite{AWP}} trains robust model by smoothing the weights' loss landscape. We use its best version trained on top of Semi-SL \cite{unlabeled}.

\textbf{Fast is Better than Free (FBF) \cite{wong2020fast}} is the state-of-the-art solution for training robust ImageNet classifier in reasonable training budget and time. 

\subsection{Attack Methods}

\textbf{Fast Gradient Sign Method (FGSM) \cite{harnessing}} is a one-step adversarial attack to fool neural networks. 

\textbf{Projected Gradient Descent (PGD) \cite{madry}} is the standard evaluation for adversarial robustness, which optimizes the adversarial noise with gradient descent for iterations, and project the noise back to the nearest boundary if it is out of the given bound. 

\textbf{Basic Iterative Attack (BIM) \cite{BIM}} is a variant of PGD attack without the initial random start.

\textbf{C\&W Attack \cite{CW}} is a powerful iterative attack that has been widely used for robustness evaluation. It reduces the logit value for the right class while increasing that for the second best class to fool the classifier.

\textbf{AutoAttack \cite{croce2020reliable}} is the state-of-the-art attack, consisting of an ensemble of parameter-free adversarial attacks including auto-PGD \cite{AA}, FAB \cite{FBA}, and Square Attack \cite{ACFH2020square}.

\textbf{Defense Aware Attack} is discussed in Section \ref{sec:defense_aware_attack}, which is theoretically the optimal adaptive white-box attack to bypass our defense algorithm. 

\subsection{Experimental Settings}

\textbf{Backbone Architectures.} Following prior literature, we conduct experiments with Pre-ResNet18 \cite{preres}, ResNet50 \cite{ResNet}, and WideResNet \cite{WRN}. We download the pretrained models' weights online. \footnote{We reproduce a few models that are not available and denote by~*} 

\textbf{Self-supervised Learning Branch.} We use a network with two fully connected layers that takes in the features from the penultimate layer of the backbone network.

\textbf{Implementation Details.} We train our self-supervision model with the Adam \cite{kingma2017adam} optimizer. We use a learning rate of $0.001$. When training the self-supervised model, we use temperature $\tau=0.2$ for the contrastive loss, with a batch size of 128. For CIFAR-10 and CIFAR-100, we train the self-supervised branch for 200 epochs. For SVHN, we train it for 600 epochs. For ImageNet, we train for 30 epochs. We set the reverse attack bound to be $\epsilon_v=2\epsilon$ and optimization iterations to be $K=40$. We implement our model with Pytorch \cite{pytorch}. Please see supplementary for details.


\subsection{Results of Defense Aware Adversarial Attacks}
\label{sec:exp_res}

In Section \ref{sec:defense_aware_attack}, we discussed the strongest adaptive attack that can be used to bypass our defense. We show the results of the adaptive attacker on CIFAR-10 in Figure \ref{fig:attackstrength}. We use attacks with 50 steps, with perturbation bound $L_{\infty}=8/255$.  We vary the value of the $\lambda_s$ from 0 to 10, where 0 corresponds to the standard PGD attack without considering our defense strategy. The results show that increasing $\lambda_s$ and focusing more attack budget to the self-supervised defense, the gain of our approach is reduced (full line falls under dotted line). However, as $\lambda_s$ gets larger, the attack for classification task also gets weaker (line goes up). While adaptive attacks successfully reduce the additional gain bought by our approach, it too significantly sacrifices the initial attack success rate on the classification task. Minimizing the $\mathcal{L}_s(\x_{a}, \y^{(s)})$ hurts the original classification attack so much that it is not worth it for the attacker to account for our defense. We also show results with 500 steps in the supplementary, where our conclusion also holds.

This finding matches our initial theory in Section \ref{sec:theory}, where leveraging the incidental structure in the images improves robustness. The decrease of attack success rate on the target classification task is also consistent with prior work \cite{Mao2020MTR}, which suggests that it is harder to simultaneously attack multiple tasks at once. In fact, the attacker is trading off between classification attack success rate and fooling the self-supervised defense. For the attacker, the optimal attack is $\lambda_s=0$, which is the standard adversarial attack without considering our self-supervised defense. Therefore, we use this setup in the remaining experiments.

\subsection{Results with Optimal Adversarial Attacks}
 In the optimal setup, $\lambda_s=0$. The attack is equivalent to the standard adversarial attack without considering the defense branch. Thus the gain from \emph{Standard} to \emph{Ours} is the lower bound of our self-supervised correction. We now show the gain on four datasets.
 
 CIFAR-10 \cite{cifar10} contains 10 categories. In Table \ref{tab:cifar10}, we add our approach to six existing robust models, where we constantly correct by up to 7.5\% of the adversarial examples, as shown in the gain from Standard to Ours. 

CIFAR-100 \cite{cifar100} contains 100 categories. In Table \ref{tab:cifar100}, we show over 5.5\% gain compared with the baselines.

SVHN \cite{SVHN} is a 10 category street view house number dataset. In Table \ref{tab:svhn}, we experiment on two methods that have pretrained models available, including the state-of-the-art semi-supervised learning \cite{unlabeled}. Ours demonstrate over 11.8\% gain compared with the original defense method.

ImageNet \cite{imagenet_cvpr09} contains 1000 categories. We use the pretrained model from Wong et al. \cite{wong2020fast}, which is the state-of-the-art ResNet50 robust model. In Table \ref{tab:imagenet}, we use 5 different attacks to access the adversarial robustness of the original model and our model with natural supervision defense. Our approach achieved over 3\% gain on robustness. 

\begin{figure}[t]
\vspace{-8mm}
\centering
\subfloat[RO \cite{rice2020overfitting}]{\label{aasxs}\includegraphics[width=0.24\textwidth]{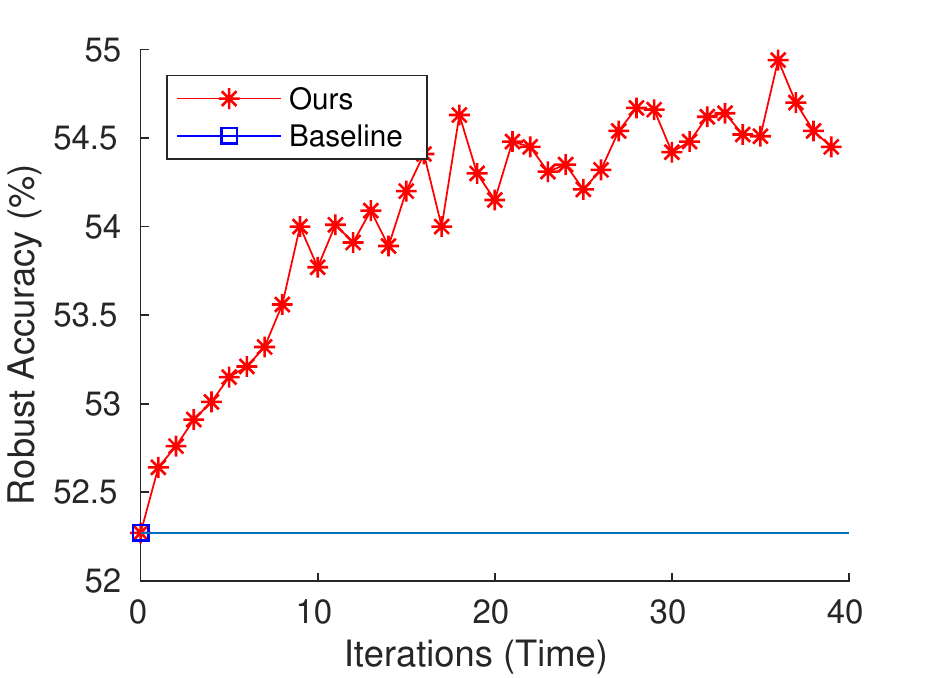}}
\subfloat[Semi-SL \cite{unlabeled}]{\label{fig:tsne_natural_two_clusters}\includegraphics[width=0.24\textwidth]{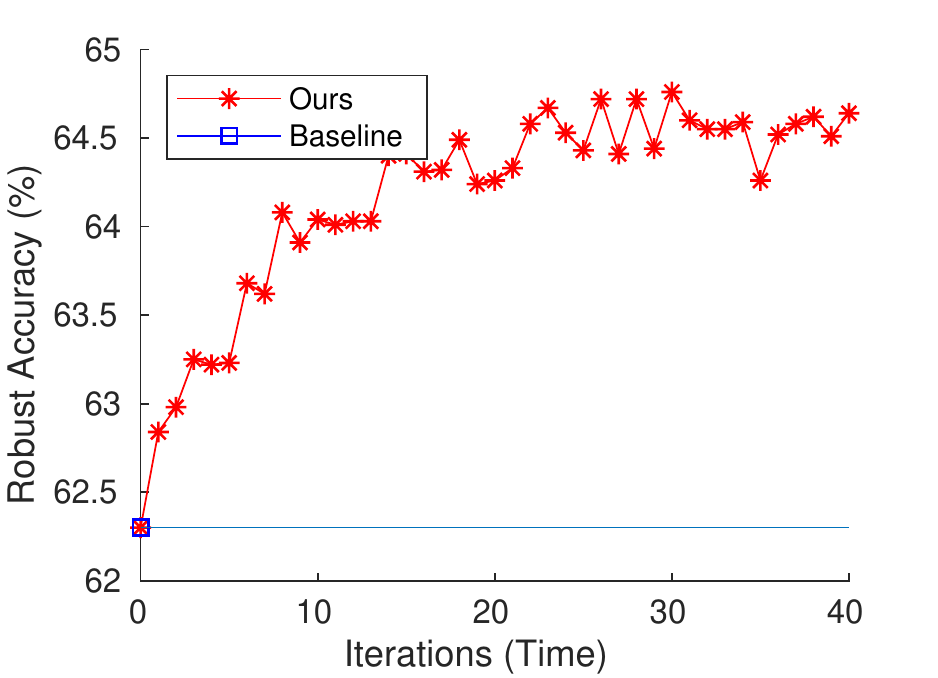}}
\caption{\textbf{Speed versus Robust Accuracy:} Trade-off between inference time and adversarial robustness on CIFAR-10 dataset. As our approach is iterative, we can stop early if the application prefers speed over the robustness. }
\label{fig:trade-time}
\vspace{-3mm}
\end{figure}

\begin{figure*}[h]
\vspace{-5mm}
\centering
\subfloat[CIFAR-10 (up to 9.2\% gain)]{\label{aasxs}\includegraphics[width=0.25\textwidth]{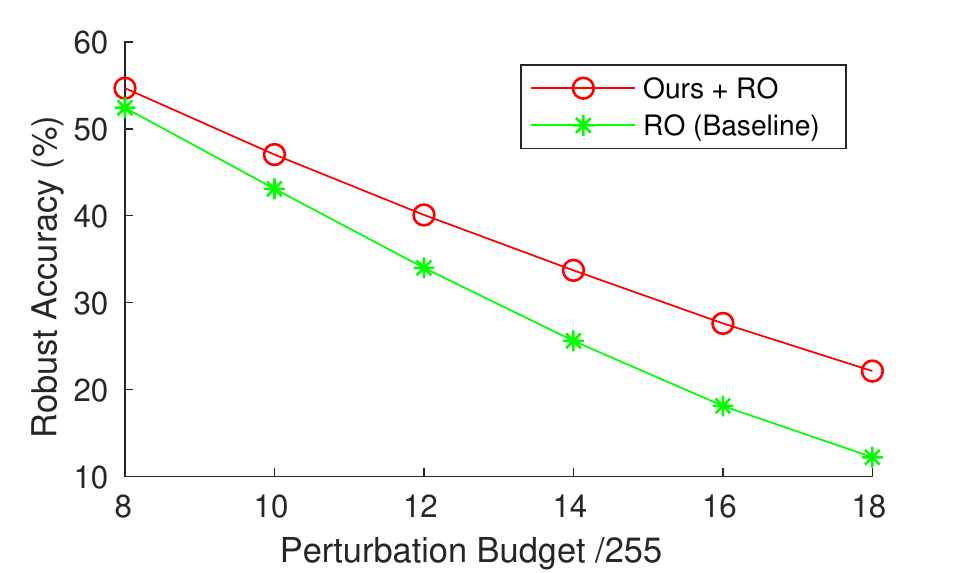}}
\subfloat[CIFAR-100 (up to 3.3\% gain)]{\label{fig:tsne_natural_two_clusters}\includegraphics[width=0.25\textwidth]{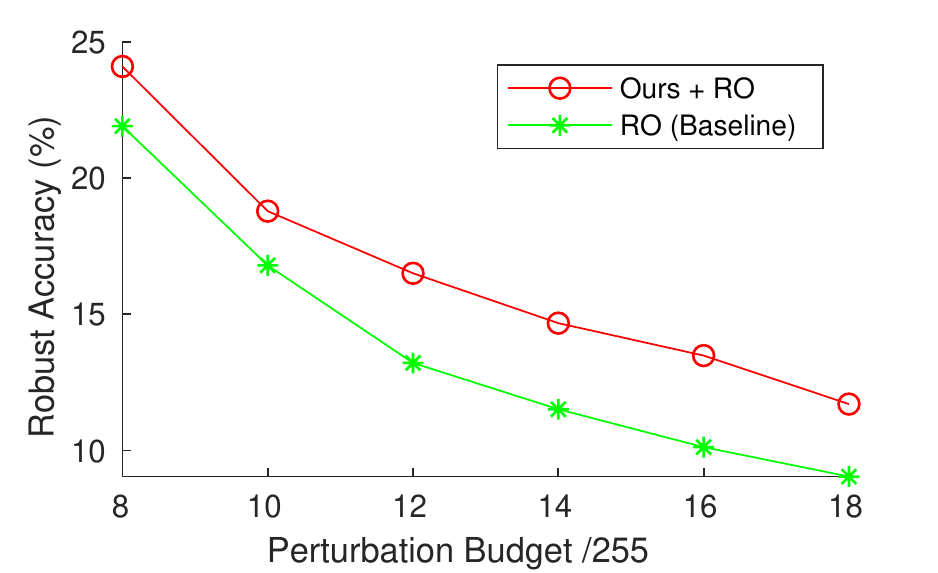}}
\subfloat[SVHN (up to 14.1\% gain)]{\label{fig:tsne_natural_two_clusters}\includegraphics[width=0.25\textwidth]{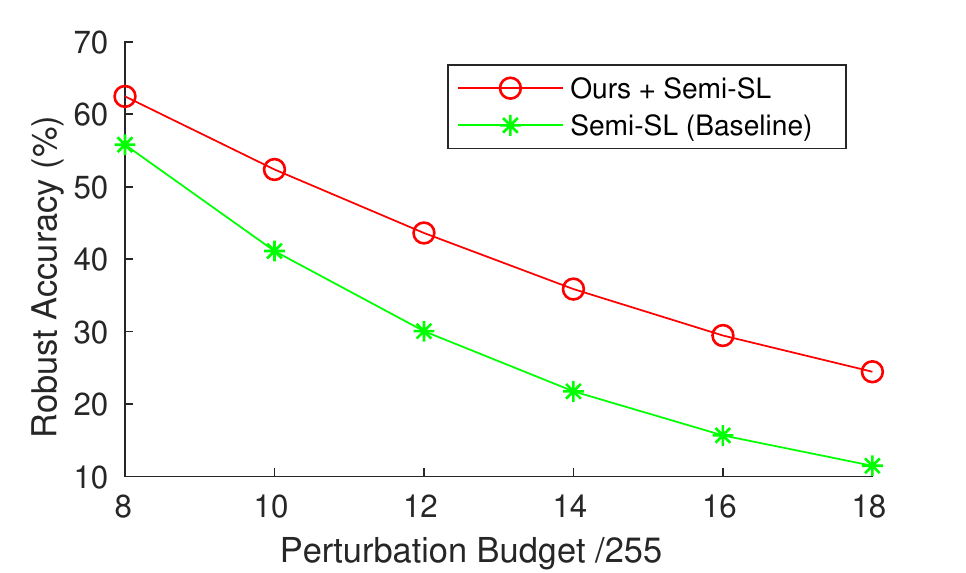}}
\subfloat[ImageNet (up to 4.4\% gain)]{\label{fig:tsne_natural_two_clusters}\includegraphics[width=0.25\textwidth]{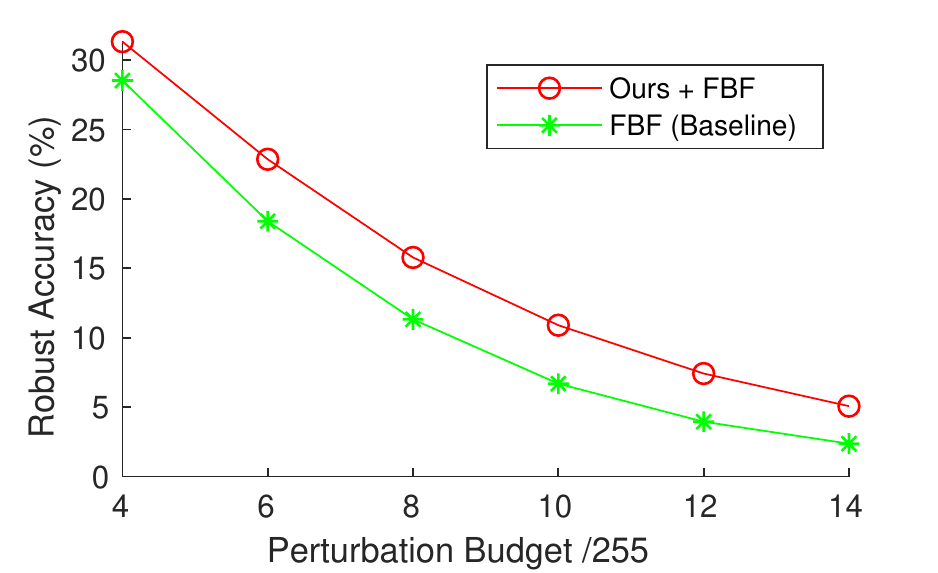}}
\vspace{-2mm}
\caption{\small{The adversarial robust accuracy vs. perturbation budget curves on CIFAR-10, CIFAR-100, SVHN, and ImageNet, under the $L_{\infty}$ norm. The red line is applying our inference algorithm to the baseline models \cite{rice2020overfitting, unlabeled, wong2020fast}. Using our inference algorithm significantly improves the robustness.}}
\label{fig:curve}
\vspace{-3mm}
\end{figure*}

\begin{figure*}[h]
\vspace{-2mm}
\centering
\subfloat[CIFAR-10]{\label{aasxs}\includegraphics[width=0.25\textwidth]{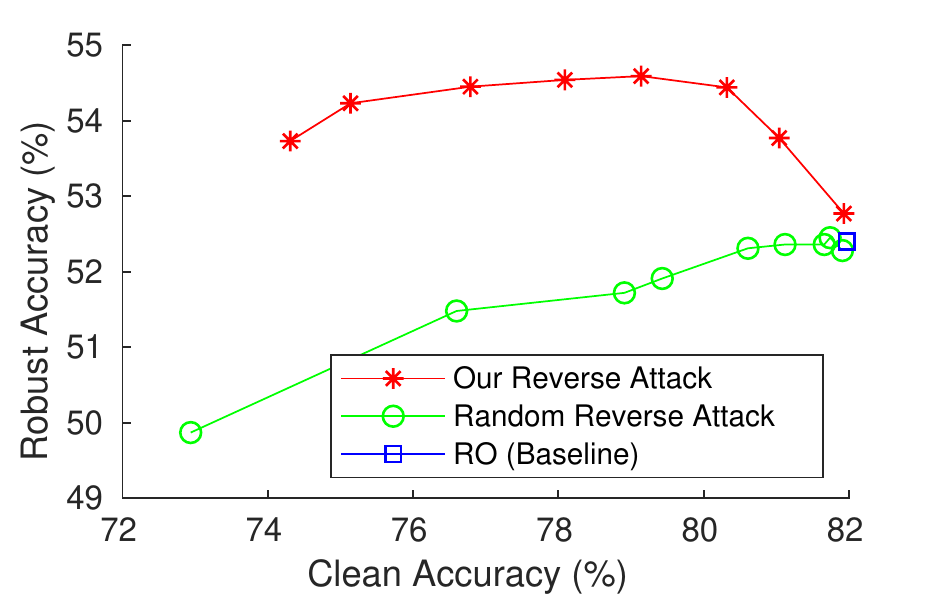}}
\subfloat[CIFAR-100]{\label{fig:tsne_natural_two_clusters}\includegraphics[width=0.25\textwidth]{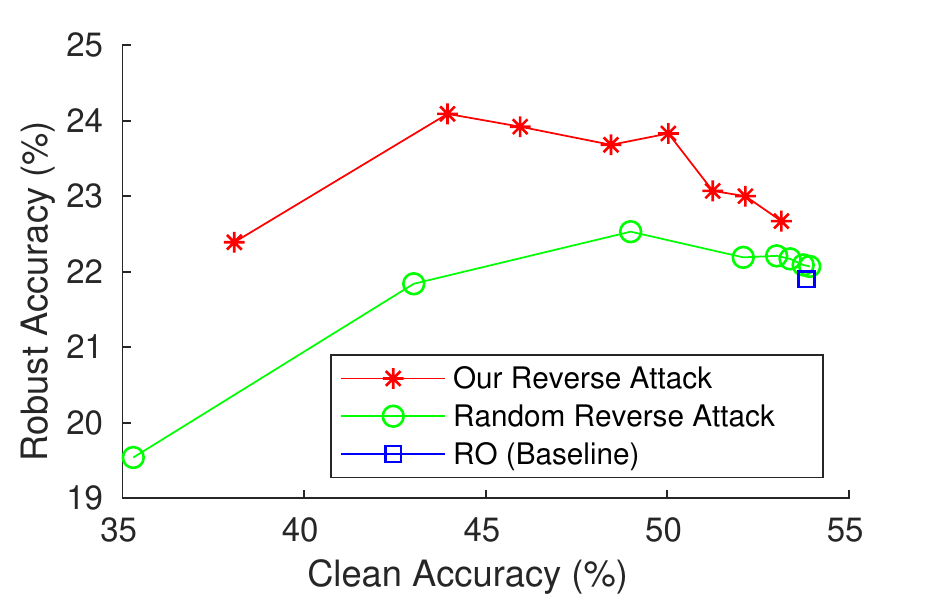}}
\subfloat[SVHN]{\label{fig:tsne_natural_two_clusters}\includegraphics[width=0.25\textwidth]{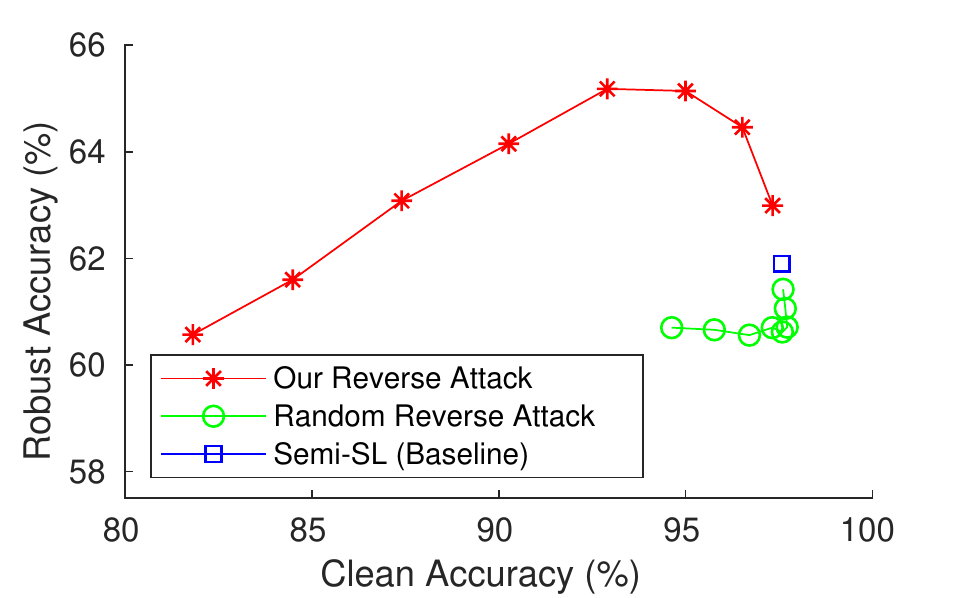}}
\subfloat[ImageNet]{\label{fig:tsne_natural_two_clusters}\includegraphics[width=0.25\textwidth]{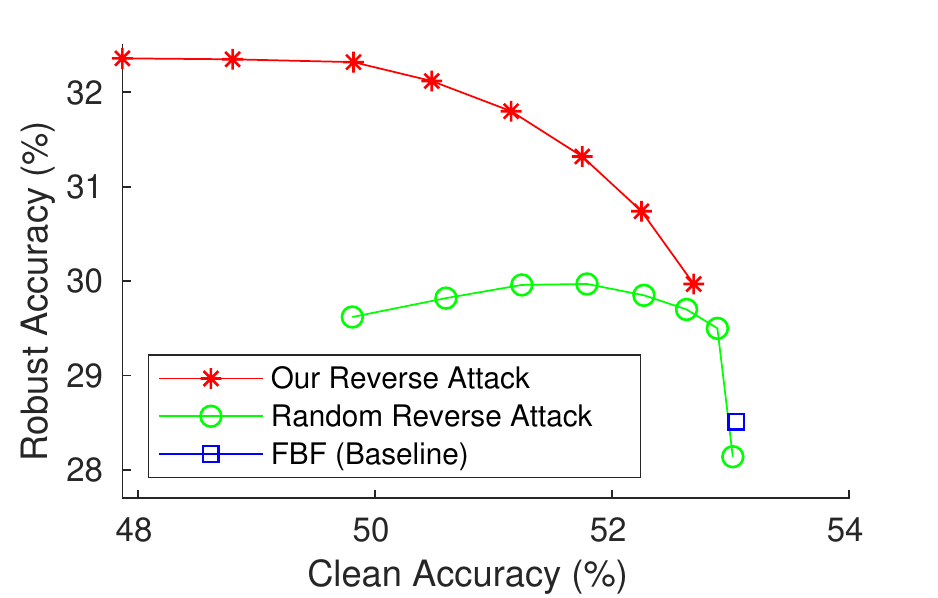}}
\vspace{-2mm}
\caption{\small{The trade-off between adversarial robust accuracy vs. clean accuracy on CIFAR-10, CIFAR-100, and SVHN under the $L_{\infty}$ norm. We increase the noise budget $\epsilon_v$ from small to large, which causes the clean accuracy to drop from right to left. Our method produces a better reversal of the adversarial perturbation than just adding random noise to reverse it.}}
\label{fig:trade}
\vspace{-3mm}
\end{figure*}

\subsection{Analysis}

\textbf{Accuracy vs. Time Budget.} As our method is iterative, we can adjust the number of iterations according to different time budget. We use the number of iterations conducted as a indicator of time, and plot the accuracy vs. time budget in Figure \ref{fig:trade-time}, where we can see even a few updates can significantly improves the robustness.

\textbf{Robustness Curve.} We adopt the robustness curve evaluation of adversarial robustness accuracy vs. the perturbation budget \cite{dong2020benchmarking}. We show the trend in Figure \ref{fig:curve}. We apply our inference algorithm as additional defense to existing robust models \cite{rice2020overfitting, wong2020fast}, where our approach achieved up to \textbf{14\%} robustness gain compared with standard inference method, especially when the attack gets stronger with larger perturbation bound.

\textbf{Trade-off between clean accuracy and adversarial robustness.} It has been proved that there exists a natural trade-off between clean accuracy and adversarial robustness given a classifier \cite{odds, TRADES}. In Figure \ref{fig:trade}, we compare our natural supervised reverse defense with the random reverse defense (baseline). We increase the additive noise level $\epsilon_v$ that is applied to reverse the adversarial examples as well as the clean examples (we use the same algorithm to clean examples because during inference we cannot distinguish adversarial ones from clean ones). The clean accuracy often drops as the noise level goes up. While there is a trade-off  between clean accuracy and robust accuracy \cite{odds, TRADES}, our approach achieved a better trade-off between them. 


\textbf{Robustness on $L_2$ norm bounded adversarial attacks.} We measure whether our defense can also generalize beyond the $L_{\infty}$ bounded attack. Table \ref{tab:l2cifar10} shows results under $L_2$ norm bounded attack on CIFAR-10, where our approach consistently improves robustness under $L_2$ norm bounded attacks by over 2.6\%.

\begin{figure}
\centering
\includegraphics[width=0.45\textwidth]{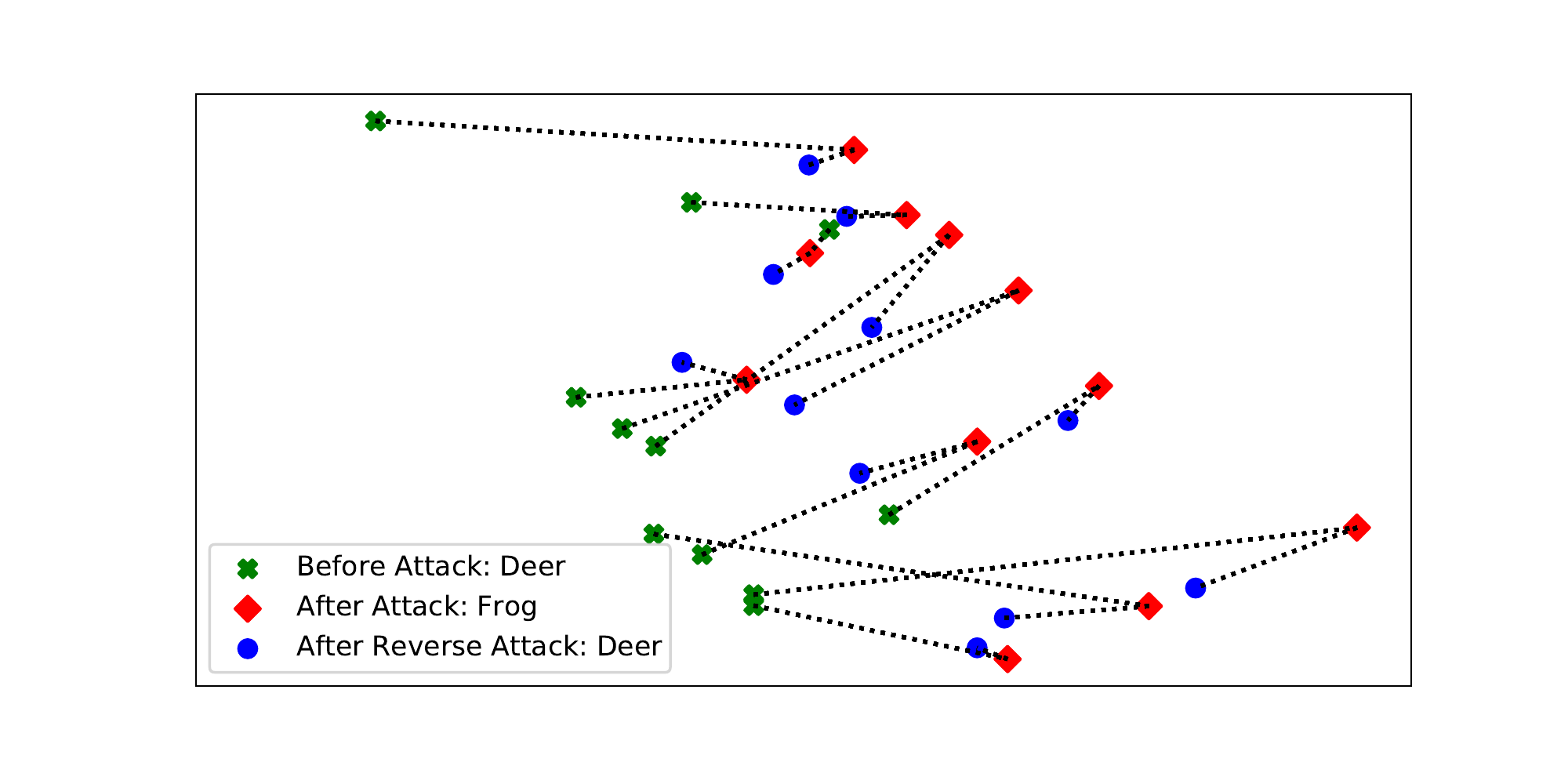}
  \caption{\textbf{Feature Trajectories:} We project the features onto a plane with PCA to visualize their trajectory under attack and our reverse attack. The green cross indicates the clean examples, the red square indicates the misclassified adversarial examples, and the blue dot indicates our reversal method.  Our approach pushes the misclassified examples (red) back to the original features (green), improving adversarial robustness.} 
  \label{fig:pca}
  \vspace{-5mm}
\end{figure} 

\textbf{Feature visualization.} Figure \ref{fig:pca} visualizes the trajectory of images' penultimate layer's feature as it transitions through an attack and the reversal. We use PCA to project the features onto a plane. The plot demonstrates that the attack shifts the feature embedding from the right class to the wrong class. Then, the reverse attack often returns the features back to the right class.

To quantify this effect, we take the Euclidean distance between the clean embedding and the attacked embedding, denoted $D_{c_a}$, as well as the Euclidean distance between the clean embedding and the inverse attacked embedding, denoted $D_{c_i}$, for the triples that have the same clean class and inverse attacked class. For all but one combination of categories,  $D_{c_a} >  D_{c_i}$.  Additionally, across all triplets, we checked how much the average distance from clean to reverse attacked is reduced from  the average distance from clean to attacked,  and obtained a value of roughly 10\% decrease. These results together demonstrate that, on average, our reverse attack returns the attacked embedding closer to the original embedding.

\section{Conclusions}

We introduce an approach to use natural supervision to reverse adversarial attacks on images. Our results demonstrate improved robustness across several benchmarks and several state-of-the-art attacks.
Our findings suggest integrating defense mechanisms into the inference algorithm is a promising direction to improve adversarial robustness.

\textbf{Acknowledgements:}  This research is based on work partially supported by NSF CRII \#1850069, NSF grant CNS-15-64055, ONR grants N00014-16-1-2263 and N00014-17-1-2788, a JP Morgan Faculty Research Award, and a DiDi Faculty Research Award. MC is supported by a CAIT Amazon PhD fellowship. We thank Nvidia for GPU donations. We thank Binghui Peng for discussions. The views and conclusions contained herein are those of the authors and should not be interpreted as necessarily representing the official policies, either expressed or implied, of the sponsors.

{\small
\bibliographystyle{ieee_fullname}
\bibliography{egbib}
}

\externaldocument[S-]{supplementary}

\newcommand{\theHalgorithm}{\arabic{algorithm}}

\date{}
\author{} 
\newpage
\newpage
\begin{subappendices}

\language0
\lefthyphenmin=2
\righthyphenmin=3

\maketitle




\section{Theoretical Results}

\subsection{Detailed Proof of Lemma 1}

\begin{lemma}
The standard classifier under adversarial attack is equivalent to predicting with $ P(Y | X=x_a, Y^{(s)} = y^{(s)}_a)$, and our approach is equivalent to predicting with $P(Y | X=x_a, Y^{(s)} = y^{(s)})$.
\end{lemma}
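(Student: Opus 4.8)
The plan is to cast both the undefended classifier and our reverse-attack defense as Bayesian predictors over an augmented model that includes the latent self-supervised label $Y^{(s)}$ and a denoised image variable $X^{(n)}$, and then to identify the concrete algorithmic steps with the corresponding probabilistic quantities. The lemma then reduces to two marginalization arguments, each collapsed by an appropriate determinism or MAP assumption.

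For the first claim (the standard classifier), I would start from the observation that a plain forward pass on the attacked image computes $P(Y \mid X = x_a)$. Marginalizing over the latent self-supervised label via the law of total probability gives
\[
P(Y \mid X = x_a) = \sum_{Y^{(s)}} P(Y^{(s)} \mid X = x_a)\, P(Y \mid Y^{(s)}, X = x_a).
\]
The key step is to argue that the self-supervised label carried by the attacked image is deterministic: because $Y^{(s)}$ is read off from the image by the self-supervised head, we have $P(Y^{(s)} = y^{(s)}_a \mid X = x_a) = 1$ and zero on every other value. This collapses the sum to the single term $P(Y \mid Y^{(s)} = y^{(s)}_a, X = x_a)$, which establishes the first equivalence.

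For the second claim (our approach), I would first characterize what Algorithm~1 computes in probabilistic terms. The reverse attack minimizes $\mathcal{L}_s(x')$ inside the $\epsilon_v$-ball around $x_a$; I would interpret the bounded region as the prior $P(X^{(n)} = x^{(n)} \mid X = x_a)$ and $-\mathcal{L}_s$ as the log-likelihood $\log P(Y^{(s)} = y^{(s)} \mid X^{(n)} = x^{(n)})$, so that the algorithm returns the MAP estimate $x^{(n)}_{\mathrm{max}} = \argmax_{x^{(n)}} P(X^{(n)} = x^{(n)} \mid X = x_a)\, P(Y^{(s)} = y^{(s)} \mid X^{(n)} = x^{(n)})$. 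By Bayes' rule this maximand is proportional to the posterior $P(X^{(n)} = x^{(n)} \mid X = x_a, Y^{(s)} = y^{(s)})$, the normalizer being constant in $x^{(n)}$ and hence irrelevant to the $\argmax$, so $x^{(n)}_{\mathrm{max}}$ is the posterior mode. Feeding this image back through the classifier yields $P(Y \mid X^{(n)} = x^{(n)}_{\mathrm{max}})$. I would then expand the target quantity by the law of total probability over $X^{(n)}$,
\[
P(Y \mid X = x_a, Y^{(s)} = y^{(s)}) = \sum_{x^{(n)}} P(Y \mid x^{(n)})\, P(x^{(n)} \mid X = x_a, Y^{(s)} = y^{(s)}),
\]
and replace the sum by its dominant MAP term to obtain $P(Y \mid X^{(n)} = x^{(n)}_{\mathrm{max}})\, P(X^{(n)} = x^{(n)}_{\mathrm{max}} \mid X = x_a, Y^{(s)} = y^{(s)})$, which is exactly the (unnormalized) prediction our pipeline produces. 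This identifies our approach with $P(Y \mid X = x_a, Y^{(s)} = y^{(s)})$.

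The main obstacle is the second claim, specifically the two modeling identifications that make the argument go through: justifying that the constrained minimization of the contrastive loss is genuinely the MAP estimate of $X^{(n)}$ (i.e.\ that $\mathcal{L}_s$ plays the role of a negative log-likelihood for $Y^{(s)}$ and the $\epsilon_v$-ball plays the role of the prior), and controlling the error incurred when the full marginalization $\sum_{x^{(n)}}$ is replaced by a single mode. I would keep both as modeling assumptions rather than exact identities, noting that the MAP approximation is taken deliberately because sampling many $X^{(n)}$ is expensive and empirically ineffective; the first claim, by contrast, is exact once the determinism $P(Y^{(s)} = y^{(s)}_a \mid X = x_a) = 1$ is granted.
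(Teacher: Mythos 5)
Your proposal is correct and follows essentially the same route as the paper's proof: the first claim via the law of total probability over $Y^{(s)}$ collapsed by the determinism $P(Y^{(s)}=y^{(s)}_a \mid X=x_a)=1$, and the second via identifying Algorithm~1's output with the MAP estimate $\argmax_{x^{(n)}} P(X^{(n)}=x^{(n)}\mid X=x_a)\,P(Y^{(s)}=y^{(s)}\mid X^{(n)}=x^{(n)})$ and approximating the marginalization over $X^{(n)}$ by its dominant mode, with the same justification for the MAP step. Your explicit reading of the $\epsilon_v$-ball as the prior and $-\mathcal{L}_s$ as a log-likelihood merely makes precise a modeling identification the paper leaves implicit, and your caveat that this is an assumption rather than an exact identity is consistent with the paper's own treatment.
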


First we show that
\begin{align*}
P(Y | X = x_a) = P(Y | X=x_a, Y^{(s)} = y^{(s)_a}).
\end{align*}
It is easy to see that
\begin{align*}
P(Y|X=x_a) &= \sum_{Y^{(s)}} P(Y^{(s)} | X=x_a) P(Y|Y^{(s)},X=x_a)\\
&=P(Y|Y^{(s)}=y^{(s)}_a,X=x_a),
\end{align*}
where the last equality is due to the neural network's deterministic nature, i.e., 
\begin{align*}
P(Y^{(s)}=y^{(s)}_a|X=x_a) = 1
\end{align*}
where $y^{(s)}_a$ is the latent self-supervised label prediction. Thus the probability is 0 otherwise.

Intuitively, this demonstrates that the attack is equivalent to using the classifier
\begin{align*}
P(Y | X=x_a, Y^{(s)} = y^{(s)}_a)
\end{align*}
to predict the label.

Next we show that our algorithm is equivalent to using the following classifier
\begin{align*}
P(Y | X=x_a, Y^{(s)} = y^{(s)}).
\end{align*}

Our algorithm finds a new input image $X^{(n)}$ that
\begin{align*}
     x^{(n)}_{\mathrm{max}} &= \argmax_{\x^{(n)}} P(\X^{(n)}=\x^{(n)}|\X=\x)P(\Y^{(s)}=\y^{(s)} |\X^{(n)}=\x^{(n)}) \\
     &= \argmax_{\x^{(n)}} P(\X^{(x)}=\x^{(n)}| \X=\x, \Y^{(s)}=\y^{(s)})
\end{align*}

Note that 
\begin{align*}
&P(\Y | \X=\x_a, \Y^{(s)} = \y^{(s)})\\
&= \sum_{\x^{(n)}} P(\Y|\x^{(n)})P(\x^{(n)}|\X=\x_a,  \Y^{(s)} = \y^{(s)}) \\
&\approx P(\Y|\X^{(n)} = \x^{(n)}_{\mathrm{max}})P(\X^{(n)} =  \x^{(n)}_{\mathrm{max}} |\X=\x_a,  \Y^{(s)} = \y^{(s)})
\end{align*}

The last formulation is our algorithm's inference procedure, where we first estimate $x^{(n)}_{\mathrm{max}}$ with adversarial image $x_a$ and self-supervised label $y^{(s)}$. We then predict the label $Y$ using our new image $x^{(n)}_{\mathrm{max}}$. We now have proved that our algorithm is equivalent to using 
\begin{align*}
P(\Y | \X=\x_a, \Y^{(s)} = \y^{(s)}).
\end{align*}

Here, we use the maximum a posteriori (MAP) estimate $X^{(n)}_{\mathrm{max}}$ to approximate the marginalization over $ \x^{(n)}$ because: first, sampling a large number of $X^{(n)}$ is computational expensive, second, our our shows the sampling is ineffective, lastly, our MAP estimation also produce a denoised image that can also be useful for other downstream tasks.

\subsection{Detailed Proof of Theorem 1}

\begin{theorem}
Assume the base classifier operates better than chance and instances in the dataset are uniformly distributed over $n$ categories. Let the prediction accuracy bounds be $P(\Y|\Y^{(s)}_a, \X_a) \in [b_1, c_1]$ and $P(\Y|\Y^{(s)}, \X_a) \in [b_2, c_2]$.
If the conditional mutual information  $I(\Y;\Y^{(s)}| \X_a) > 0$, we have $b_2 \geq b_1$ and $c_2 > c_1$, which means our approach strictly improves the bound for classification accuracy.
\end{theorem}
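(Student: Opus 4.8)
The plan is to reduce the entire claim to a single comparison of mutual informations and then push that comparison through Fano's inequality. The starting point is Lemma~1 (just established): because the self-supervised label under attack $\y^{(s)}_a$ is a deterministic function of $\x_a$, conditioning the classifier on it carries no additional information, so $I(\Y;\Y^{(s)}_a,\X_a)=I(\Y;\X_a)$. By the chain rule of mutual information, $I(\Y;\Y^{(s)},\X_a)=I(\Y;\X_a)+I(\Y;\Y^{(s)}\mid\X_a)$, and the hypothesis $I(\Y;\Y^{(s)}\mid\X_a)>0$ therefore yields the strict inequality $I(\Y;\Y^{(s)},\X_a)>I(\Y;\X_a)=I(\Y;\Y^{(s)}_a,\X_a)$. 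Everything downstream is monotone in this quantity, so the whole theorem hinges on it.

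For the upper bounds $c_1,c_2$ I would invoke Fano's inequality in the form $H(\Y\mid Z)\le Q(\epsilon_p)$ with $Q(\epsilon_p)=H(\epsilon_p)+\epsilon_p\log(n-1)$, applied once with $Z=(\Y^{(s)}_a,\X_a)$ and once with $Z=(\Y^{(s)},\X_a)$. Writing $H(\Y\mid Z)=H(\Y)-I(\Y;Z)$ and using that the data is uniform over $n$ classes, so $H(\Y)=\log n$ is a fixed constant, the minimal admissible error satisfies $\epsilon_p\ge Q^{-1}\!\left(H(\Y)-I(\Y;Z)\right)$, i.e. accuracy is at most $1-Q^{-1}\!\left(H(\Y)-I(\Y;Z)\right)$. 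Since the base classifier beats chance we operate in the regime $\epsilon_p<1-\tfrac{1}{n}$ on which $Q$ is strictly increasing, so $Q^{-1}$ is well defined and strictly increasing; feeding the strict inequality of the first paragraph through the map $I\mapsto 1-Q^{-1}\!\left(H(\Y)-I\right)$ then gives $c_2>c_1$ directly.

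For the lower bounds $b_1,b_2$ I would argue by achievability rather than by a converse: a predictor granted $(\Y^{(s)},\X_a)$ can always discard the extra coordinate and reproduce any decision rule based on $(\Y^{(s)}_a,\X_a)\equiv\X_a$, so the best accuracy attainable with more information dominates that attainable with less. Concretely I would bound the Bayes error from above by a decreasing function of $I(\Y;Z)$ (for instance a conditional-entropy / Hellman--Raviv bound of the shape $\epsilon\le\tfrac{1}{2}H(\Y\mid Z)$), so that increasing the mutual information can only lower the error ceiling and hence raise the accuracy floor, yielding $b_2\ge b_1$. The inequality is only weak because the extra self-supervised information need not bind at the worst case over the predictors realizing the floor.

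The main obstacle is precisely this lower-bound half: unlike the Fano converse, a clean and universally valid \emph{lower} bound on accuracy in terms of $I(\Y;Z)$ is delicate, so I would need to pin down exactly which achievability inequality is in play and verify it is monotone in $I$ on the better-than-chance regime, while carefully tracking why the conclusion is non-strict ($b_2\ge b_1$) in contrast to the strict $c_2>c_1$. The secondary technical point is justifying that $Q$ is monotonically increasing on $\epsilon_p<1-\tfrac{1}{n}$ so that $Q^{-1}$ exists and preserves strict inequalities, which is the fact relegated to the supplementary material.
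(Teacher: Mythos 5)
Your proposal is correct and follows essentially the same route as the paper: the same strict comparison $I(\Y;\Y^{(s)},\X_a) > I(\Y;\Y^{(s)}_a,\X_a) = I(\Y;\X_a)$ from determinism of $\y^{(s)}_a$ plus the chain rule, and the same Fano converse with $Q(\epsilon_p)=H(\epsilon_p)+\epsilon_p\log(n-1)$, which the paper's supplementary verifies is strictly increasing on the better-than-chance regime $\epsilon_p < 1-\tfrac{1}{n}$ by solving $Q'(\epsilon_p)=0$, giving $c_2 > c_1$ exactly as you describe. For the lower bounds the paper uses precisely your first, simple achievability observation---a predictor given $(\Y^{(s)},\X_a)$ can discard the extra coordinate and reproduce any rule based on $\X_a$ alone, hence $b_2\ge b_1$ (weak because the extra information need not help)---so the Hellman--Raviv machinery you flag as the main obstacle is never needed.
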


\begin{proof}
If $I(\Y;\Y^{(s)}| \X=\x_a) > 0$, we have:
\begin{align*}
    I(\Y;\Y^{(s)}, \X_a) > I(\Y;\Y^{(s)}_a, \X_a) = I(\Y; \X_a)
\end{align*}


We let the predicted label to be $\hat{\Y}$, we assume there are $n$ categories, and let the lower bound for prediction accuracy to be $Pr(\hat{\Y}=\Y) \geq 1 - \epsilon_p$. We define $H(\epsilon_p)=-\epsilon_p log \epsilon_p-(1-\epsilon_p)log(1-\epsilon_p)$.
Use the \emph{Fano's Inequality} \cite{fano}, we have 
\begin{equation}
    H(\Y|\X_a) \leq H(\epsilon_p) + \epsilon_p \cdot \log(n-1)
\end{equation}


\begin{equation}
     -\epsilon_p \cdot \log(n-1) \leq H(\epsilon_p) -H(\Y|\X_a) 
\end{equation}

We add $H(\Y)$ to both side

\begin{equation}
    H(\Y) - \epsilon_p \cdot \log(n-1) \leq H(\epsilon_p) + I(\Y;\X_a)
\end{equation}
because $I(\Y;\X_a) = H(\Y) - H(\Y|\X_a)$. 

Then we get 

\begin{equation}
\label{eq:fano}
    H(\epsilon_p) + \epsilon_p \log(n-1) \geq - I(\Y;\X_a) + H(\Y)
\end{equation}

Now we define a new function  $G(\epsilon_p) = H(\epsilon_p) + \epsilon_p log(n-1)$. Given that in classification task, the number of category $n \geq 2$. We know $ log(n-1) \geq 0$. Given that the entropy function $H(\epsilon_p)$ first increase and then decrease, the function $G(\epsilon_p)$ should also first increase, peak at some point, and then decrease.

We calculate the $\epsilon_p$ for the peak value via calculate the first order derivative $G'(\epsilon_p)=0$. By solving this, we have:

\begin{equation}
    \epsilon_p = 1 - \frac{1}{n}
\end{equation}
which shows that the function $G(\epsilon_p)$ is monotonically increasing when $\epsilon_p \in [0, 1 - \frac{1}{n}]$. 

Given that we know, the base classifier already achieves accuracy better than random guessing, thus the given classifier satisfies $\epsilon_p \in [0, 1 - \frac{1}{n}]$.
Now, the function $G(\epsilon_p) = H(\epsilon_p) + \epsilon_p log(n-1)$ is a monotonically increasing function in our studied region, which has the inverse function $G^{-1}$.


By rewritting the equation \ref{eq:fano} We then have 
\begin{equation}
    G(\epsilon_p) \geq - I(\Y;\X_a) + H(\Y)
\end{equation}

We apply the inverse function $G^{-1}$ to both side:
\begin{equation}
    \epsilon_p \geq G^{-1}(- I(\Y;\X_a) + H(\Y))
\end{equation}

\begin{equation}
    1- \epsilon_p \leq 1 - G^{-1}(- I(\Y;\X_a) + H(\Y))
\end{equation}

Note that $(1-\epsilon_p)$ is our defined accuracy, thus $b_1 = 1 - G^{-1}(- I(\Y;\X_a) + H(\Y))$

The above derivation also applies to $P(\Y|\Y^{(s)}, \X=\x_a) \in [a_2, b_2]$, thus $b_2 = 1 - G^{-1}(- I(\Y;\X_a, \Y^{(s)}) + H(\Y))$.

Since $ I(\Y;\Y^{(s)}, \X_a) > I(\Y; \X_a)$
Thus $b_2>b_1$, using our approach, the upper bound for robust accuracy is improved.

To prove the lower bound $a_2 \geq a_1$, we divide the joint set of $\Y^{(s)} \cup \X_a$ into set $\X_a$ and $(\Y^{(s)} \cup \X_a) - \X_a$, given the additional information from $(\Y^{(s)} \cup \X_a) - \X_a$, the accuracy will not get worse, thus the new lower bound $a_2$ should not be smaller than $a_1$.


\end{proof}

\subsection{Defense Aware Attack}

We derive the defense aware attack in our main paper in details. We make the latent label $y^{(s)}$ explicit in our notation.

The straight forward adaptive attack is to optimize the attack in an adversary way against the defense.

\begin{align}
    \boldsymbol{r} &= \argmin_{\boldsymbol{r}} \mathcal{L}_s(\x+\boldsymbol{r}, \y^{(s)}) \\
    \boldsymbol{\delta} &= \argmax_{\boldsymbol{\delta}} \mathcal{L}_c(\x+\boldsymbol{r}+\boldsymbol{\delta}, \y)
\end{align}

From the attacker perspective, the above optimization is not ideal, as it involves iterative optimization of two directions, thus the gradient estimated maybe not stable enough, even having gradient obfuscation \cite{obfuscated}. Following the standard constrained optimization attack practice from \cite{intriguing, CW}, the attacker reformulates the above equation as a constrained optimization problem:

\begin{align}
\label{eq:opt}
    \mathrm{maximize}\quad &\mathcal{L}_c(\x_{a}+\boldsymbol{r}, \y), \\
    \quad \text{s.t.} \quad &\mathcal{L}_s(\x_{a} + r, \y^{(s)}) \leq \epsilon' 
\end{align}
where $\epsilon'$ is the same value as the converged loss $\mathcal{L}_s$ for natural images. Intuitively, the attacker should  maximize the adversarial gain while respecting the self-supervised loss if they want to render our defense ineffective.
    

The above equation is equivalent to:
\begin{align}
\label{eq:opt}
    \mathrm{minimize}\quad &-\mathcal{L}_c(\x_{a}+\boldsymbol{r}, \y), \\
    \quad \text{s.t.} \quad &\mathcal{L}_s(\x_{a} + r, \y^{(s)}) -\epsilon' \leq 0
\end{align}

We can use the Lagrangian Penalty Method to derive the following:
\begin{equation}
     \mathcal{L}_l(\x_{a}, \lambda_s)=-\mathcal{L}_c(\x_{a}, \y) + \lambda_s (\mathcal{L}_s(\x_{a} + \boldsymbol{r}, \y^{(s)}) - \epsilon')
\end{equation}

Thus the optimal value for the attack $\x_{a}^*$ is:
\begin{equation}
    \x_{a}^* =  \min_{\x_{a}} \max_{\lambda_s\geq 0} \mathcal{L}_l(\x_{a}, \lambda_s)
\end{equation}
which is the primal.

Using the Weak Duality Theorem we have the following upper bound for the optimal solution of the above optimization problem~\cite{KKT2}:
\begin{equation}
    \x_{a}^* = \min_{\lambda_s\geq 0} \max_{\x_{a}} \mathcal{L}_l(\x_{a}, \lambda_s)
\end{equation}
Removing the negative sign, we have:
\begin{equation}
    \x_{a}^* = \max_{\lambda_s\geq 0} \max_{\x_{a}} (\mathcal{L}_c(\x_{a}, \y) - \lambda_s (\mathcal{L}_s(\x_{a} + \boldsymbol{r}, \y^{(s)}) - \epsilon')).
\end{equation}

which is equivalent to first maximizing the followings under different $\lambda_s$:
\begin{equation}
    \mathcal{L}_l(\x_{a}, \lambda_s)=\mathcal{L}_c(\x_{a}, \y) - \lambda_s \mathcal{L}_s(\x_{a}, \y^{(s)})
\end{equation}
And then select the $\lambda_s$ that yields the most damage with the lowest robust accuracy, and use the corresponding generated attack $\x_{a}^*$.
%

\section{Experimental Results}

\subsection{Defense Aware Adversarial Attack}
We show the numerical results for the defense aware attack in Table \ref{tab:adaptive_attack}. In addition to the 50 steps we used in our main paper, we also show results using 500 steps of adaptive attack. We apply 500 steps to the RO (robust overfitting method by Rice et al. \cite{rice2020overfitting}). The results clearly show that using more steps does not change the conclusion. 500 steps of attack achieve almost the same robust accuracy as the 50 steps baseline, which suggests that the attack is almost converged. In addition, our approach still efficiently improves the robust accuracy by over 2\%. Lastly,  the attacker needs a $\lambda_s=4$ in order to bypass our defense through the reverse attack, however, at a cost that the attack for classification task gets weaker by over 7\%, which itself helps our defense. Overall, even under more attack steps, our defense is still effective.





\begin{table*}[t]
\begin{center}
      \vspace{-3mm}
    \centering
    \small
    \begin{tabular}{l|c|c|c|c|c|c|c|c}
         \toprule
         & \multicolumn{8}{c}{Defense Aware Adversarial Attack}  \\
         $\lambda_S$ & 0 & 0.5 & 1  & 2 & 4 & 6 & 8 & 10 \\
         \midrule  
        RO \cite{rice2020overfitting} 50 steps Baseline & \textbf{52.40\%} & 53.81\% & 55.41\% & 57.81\% & 60.80\% & 62.62\% & 63.81\% & 64.58\%\\
        RO \cite{rice2020overfitting} 50 steps with Ours &\textbf{54.59\%} & 55.61\% & 56.75\% & 58.67\% & 60.81\% & 62.07\%  & 63.16\% & 63.68\% \\
        \midrule
        RO \cite{rice2020overfitting} 500 steps  Baseline & \textbf{52.23\%} & 53.47\% & 54.89\% & 57.07\% & 59.86\% &  61.89\% & 63.09\% & 63.90\% \\
        RO \cite{rice2020overfitting} 500 steps with Ours & \textbf{54.70\%} & 55.61\% & 56.68\% & 58.17\%  & 60.51\%  & 61.26\% & 62.40\% & 63.06\% \\
        \midrule
        Semi-SL \cite{unlabeled} 50 steps Baseline & \textbf{62.30\%} & 63.87\% & 65.38\% &  67.87\% & 70.60\% & 72.32\% & 73.52\% & 74.42\% \\
        Semi-SL \cite{unlabeled} 50 steps with Ours & \textbf{64.64\%} &  65.62\%  & 66.72\% & 68.58\% & 70.55\% & 71.43\% & 72.75\% & 73.19\% \\
         \bottomrule
    \end{tabular}
\end{center}
      \vspace{-4mm}
\caption{The robust accuracy for defense aware attack under different $\lambda_S$ setup. We increase the $\lambda_S$ of defense aware attack from none to 10, and show robustness accuracy on two robust model RO and Semi-SL on CIFAR-10 dataset. While increasing the value of $\lambda_S$ decrease the gain of Ours compared with Baseline, it also increase the robust accuracy of baseline methods. To achieve the best attack efficiency, the attack should use $\lambda_S=0$, the standard attack without considering fooling the self-supervised branch.} \label{tab:adaptive_attack}
\end{table*}

\subsection{Defending a Stand-alone Non-robust Network}

In our main paper, we apply our approach to a list of existing state-of-the-art models, now we show results on applying our approach to an undefended neural network. 

We train a PreRes-18 model on pure clean images without any adversarial training, which yields 0\% robust accuracy under $L_{\infty}=8/255$ adversarial attack. We then use our defense to reverse the attack via natural supervision. We achieve an improvement in robust accuracy of 34.4\%.

\subsection{Feature Input for Self-supervised Models} We investigate which layer's feature should be the input for the self-supervision model. We conduct an ablation study that read out the latent features from low to high layers. Results in Figure \ref{fig:whichlayer} show that read our feature from the top layer for self-supervision achieved the most robustness gain.

\begin{figure*}[t]
\centering
\includegraphics[width=0.7\textwidth]{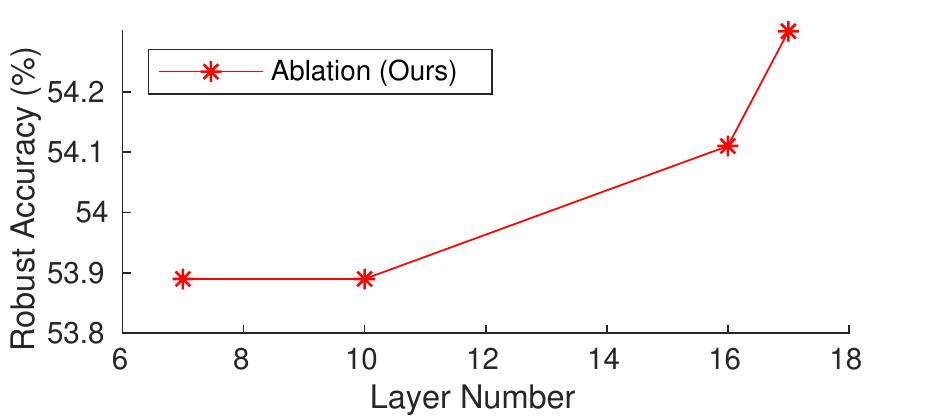}
  \caption{On CIFAR-10 dataset, we experiment on reading out from differnt layers and plot the robustness gain here.}
  \label{fig:whichlayer}
\end{figure*} 




\subsection{Implementation Details}

 We run our experiments with 8 RTX 2080 Ti GPU. For CIFAR-10, CIFAR-100, and SVHN, the input dimension is all $32 \times 32 \times 3$. We maximize the GPU space usage to speed up our inference. For the PreRes-18 model, we use a batch size of 1024 during inference. For the Wide Residual Network model, we use a batch size of 512. For the ImageNet dataset, the input size is $256 \times 256 \times 3$, we use the ResNet-50 model, and due to the larger input dimension and model capacity, we use a batch size of 87 to maximize the GPU usage. For all the contrastive learning, we sample 4 positive views for each given image instance.

\section{Visualization}

\subsection{Attack Vector Visualization:}
We visualize more examples of the adversarial attack vector and the inverse attack vector in Figure \ref{fig:viz_vec}. Our reverse attack vector is highly structured, reversing the mispredicted examples back to the right one. 

\subsection{Feature Visualization:}
We show more visualizations of the feature trajectories of our approach in Figure \ref{fig:pca_asa}, Figure \ref{fig:pca_ausau}, Figure \ref{fig:pca_cdc}, Figure \ref{fig:pca_dsd}, and Figure \ref{fig:pca_hrh}. We project the features onto a plane with PCA under the same setup as Figure 8 in the main paper. We can see that our approach pushes the misclassified examples (red) back to the original features (green), improving adversarial robustness.

\begin{figure*}[t]
\centering
\includegraphics[width=0.9\textwidth]{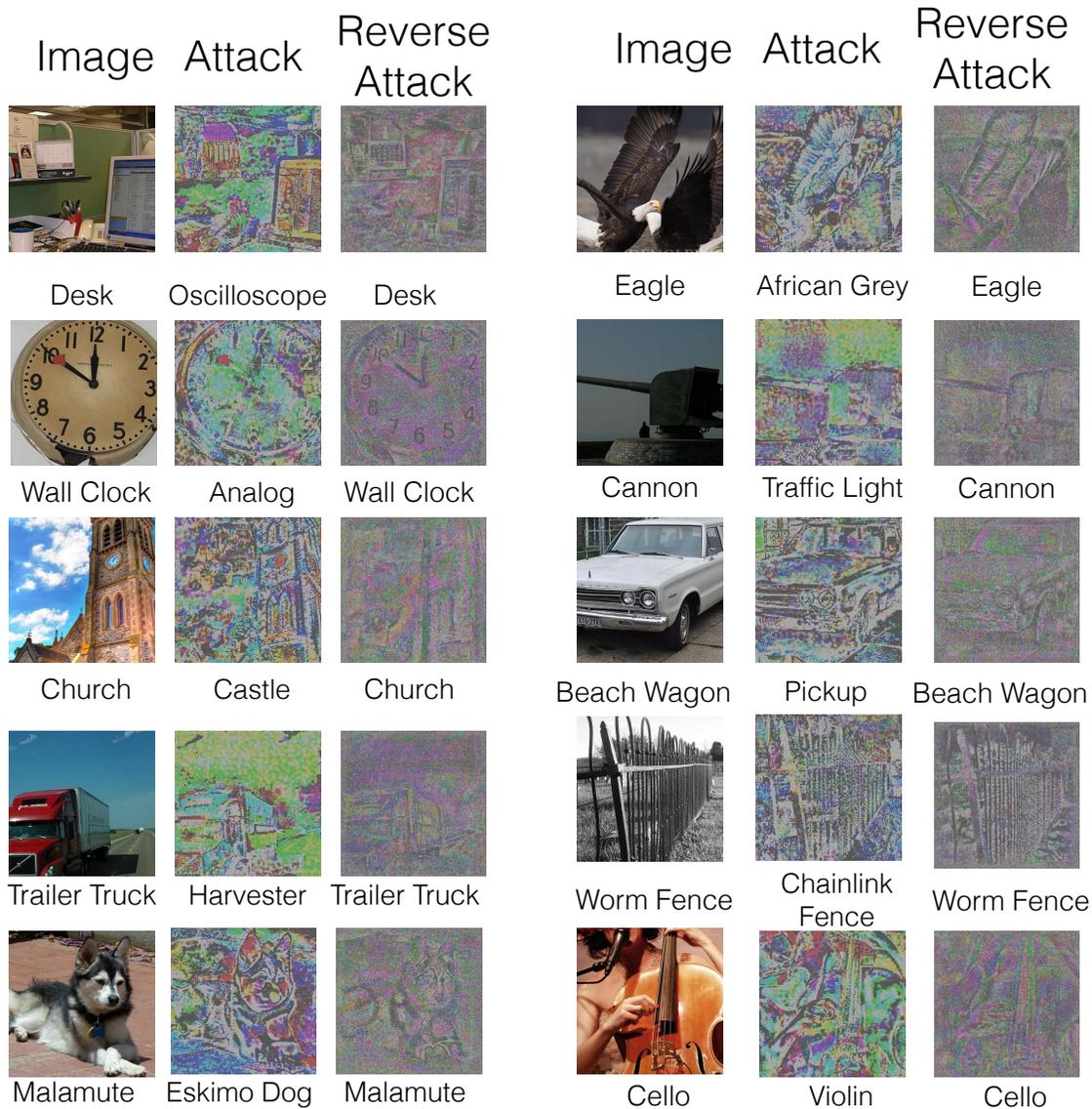}
  \caption{We show ImageNet's clean examples, attack vector, and our reverse attack vector. The adversarial attack is bounded by $\L_{\infty}=4/255$. By adding our reverse attack to the attack image, we can correct the misprediction on ImageNet classifier. As we can see, the reverse attack vector is also highly structured, which explains the reason that our approach is more efficient than adding random noise. The attack and reverse attack vectors have been multiplied by ten for visualization purposes only.}
  \label{fig:viz_vec}
\end{figure*}

\begin{figure*}[t]
\centering
\includegraphics[width=0.9\textwidth]{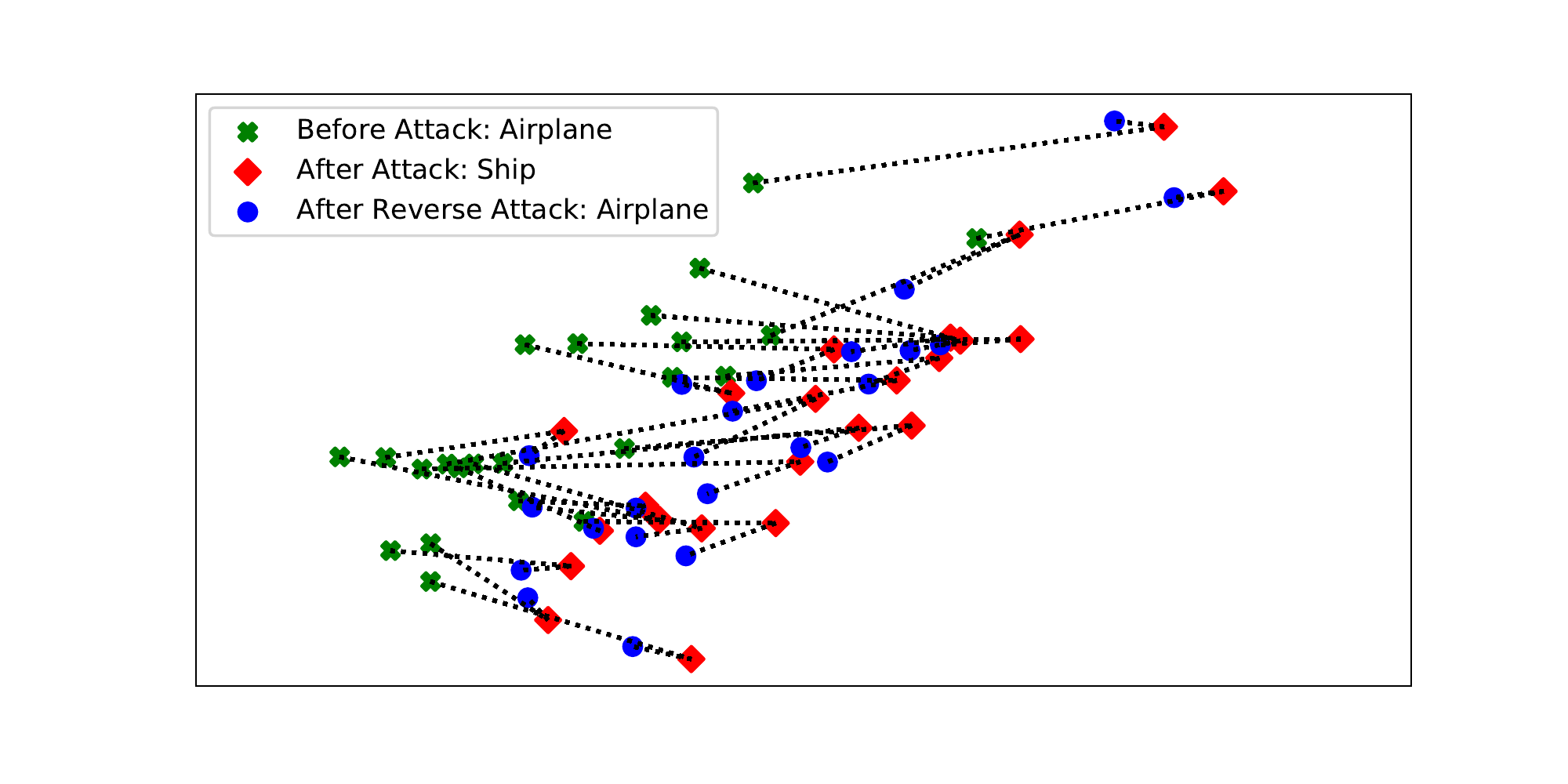}
  \caption{Feature Trajectories under attack and our reverse attack. We plot the figure in the same way as Figure 8 in the main paper with PCA.}
  \label{fig:pca_asa}
\end{figure*} 

\begin{figure*}[t]
\centering
\includegraphics[width=\textwidth]{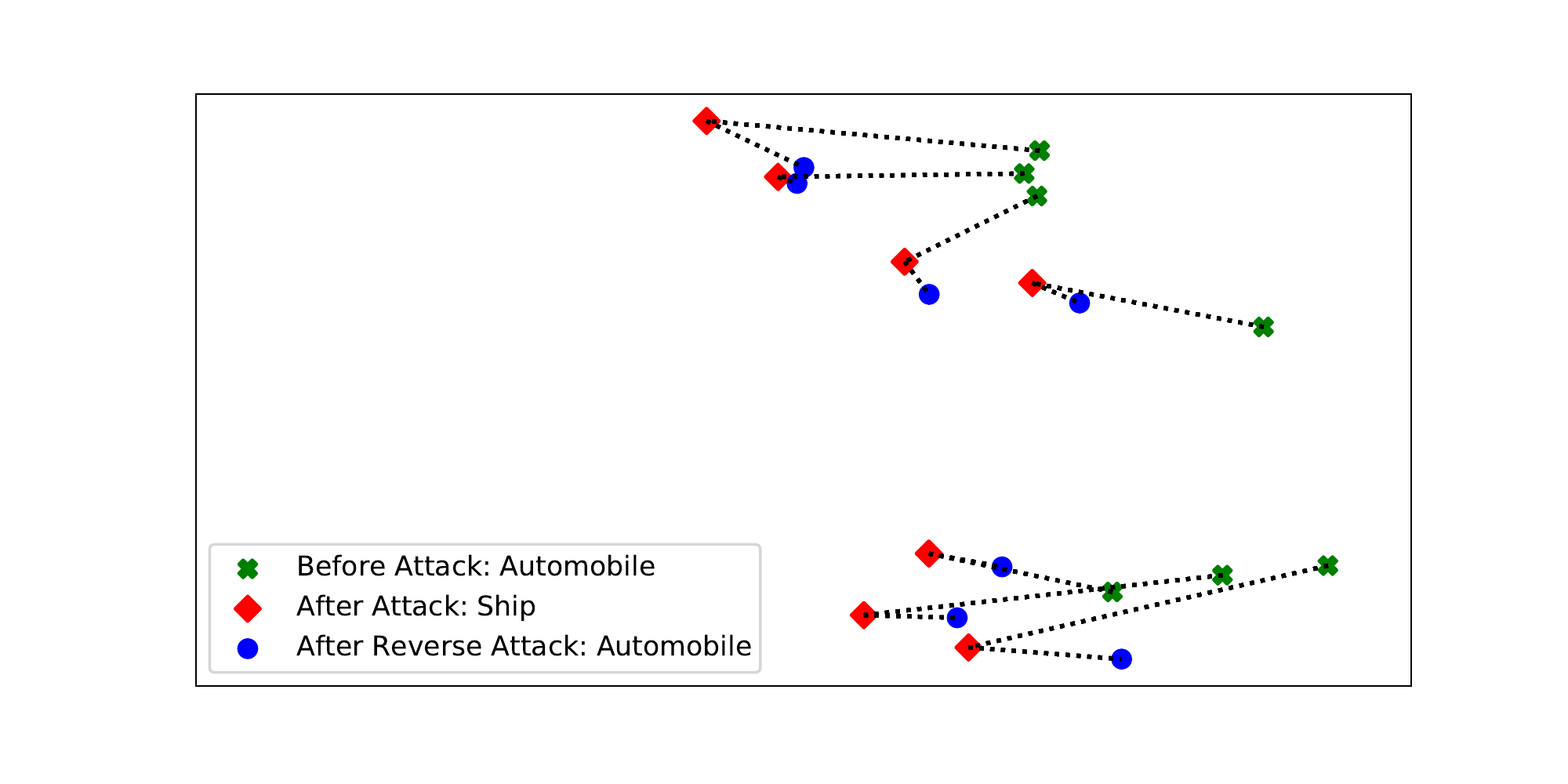}
  \caption{Feature Trajectories under attack and our reverse attack. We plot the figure in the same way as Figure 8 in the main paper with PCA.}
  \label{fig:pca_ausau}
\end{figure*} 

\begin{figure*}[t]
\centering
\includegraphics[width=\textwidth]{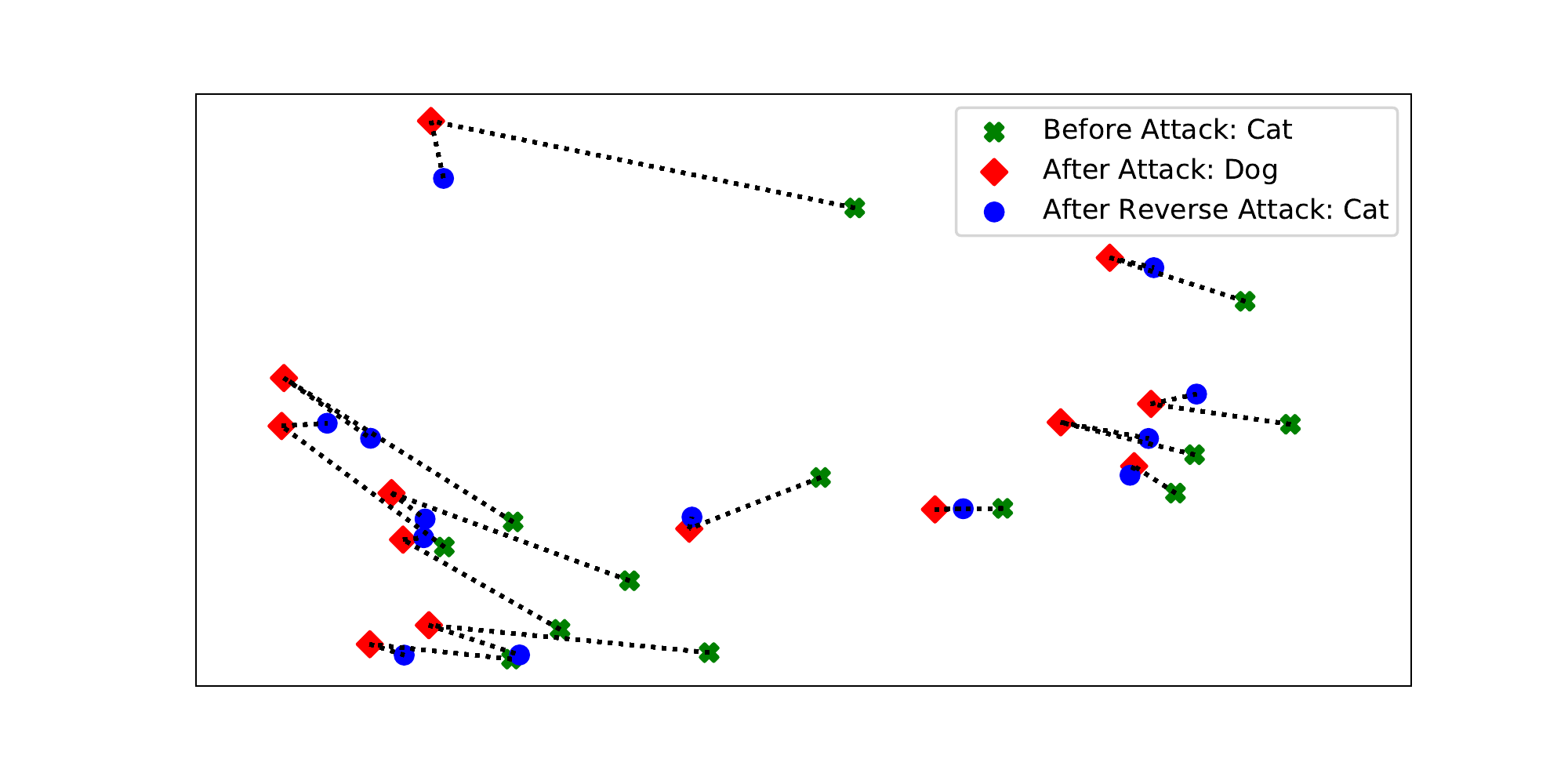}
  \caption{Feature Trajectories under attack and our reverse attack. We plot the figure in the same way as Figure 8 in the main paper with PCA.}
  \label{fig:pca_cdc}
\end{figure*} 

\begin{figure*}[t]
\centering
\includegraphics[width=\textwidth]{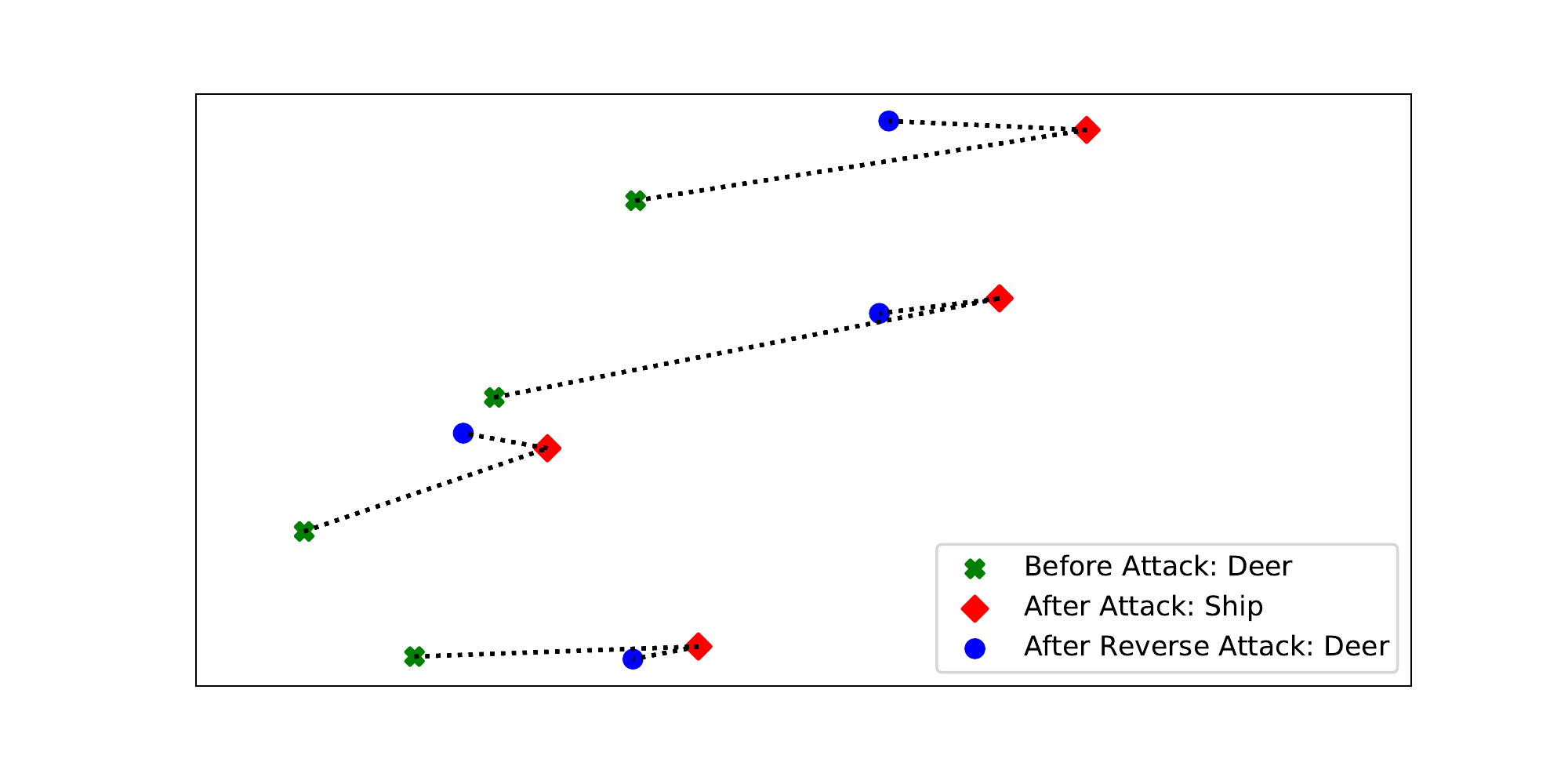}
  \caption{Feature Trajectories under attack and our reverse attack. We plot the figure in the same way as Figure 8 in the main paper with PCA.}
  \label{fig:pca_dsd}
\end{figure*}

\begin{figure*}[t]
\centering
\includegraphics[width=\textwidth]{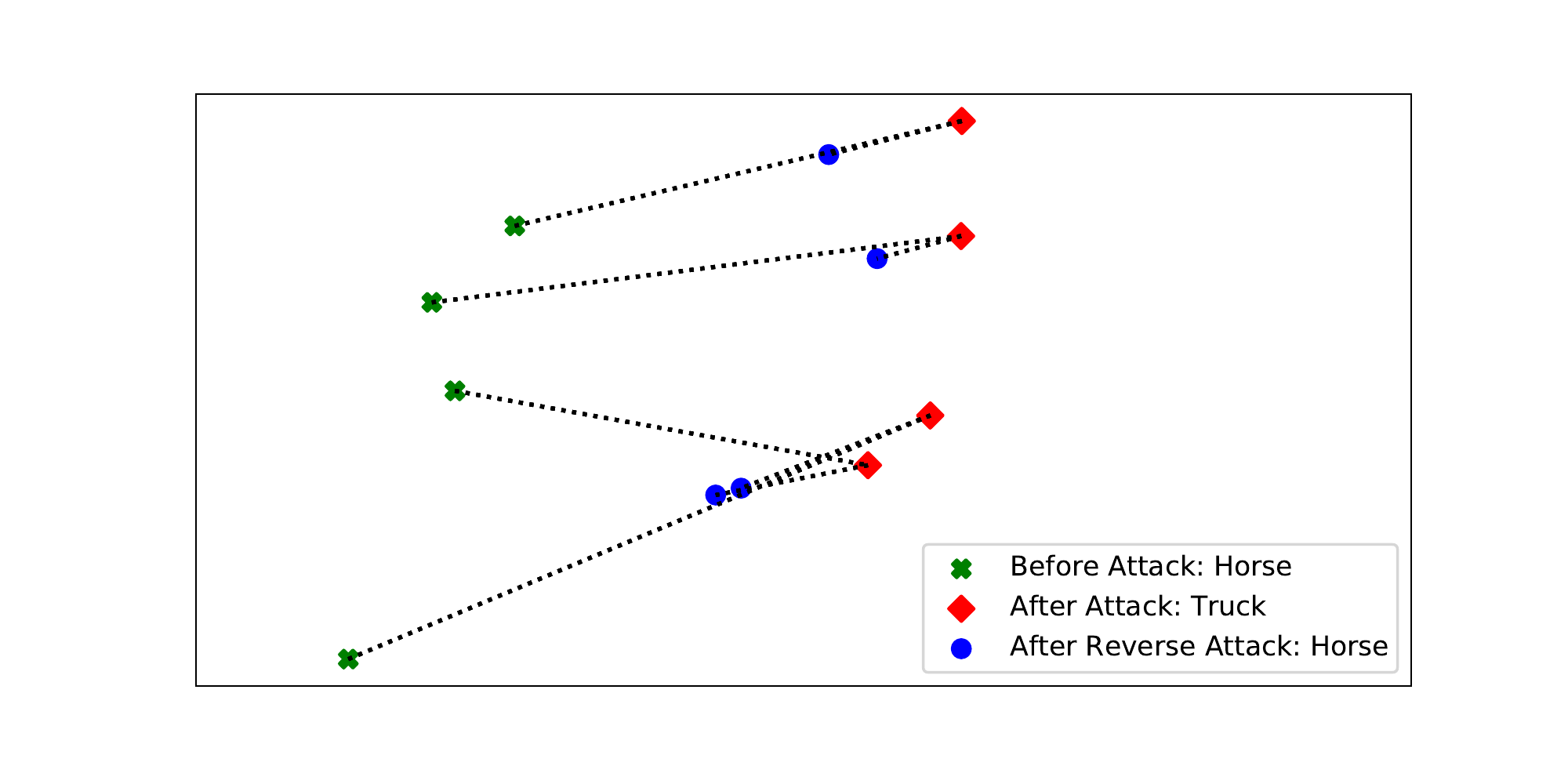}
  \caption{Feature Trajectories under attack and our reverse attack. We plot the figure in the same way as Figure 8 in the main paper with PCA.}
  \label{fig:pca_hrh}
\end{figure*} 

\clearpage

\bibliography{reference}

\begin{thebibliography}{10}\itemsep=-1pt

\bibitem{ACFH2020square}
Maksym Andriushchenko, Francesco Croce, Nicolas Flammarion, and Matthias Hein.
\newblock Square attack: a query-efficient black-box adversarial attack via
  random search.
\newblock 2020.

\bibitem{obfuscated}
Anish Athalye, Nicholas Carlini, and David Wagner.
\newblock Obfuscated gradients give a false sense of security: Circumventing
  defenses to adversarial examples.
\newblock In {\em Proceedings of the 35th International Conference on Machine
  Learning}, volume~80, pages 274--283. PMLR, 2018.

\bibitem{fourier_fake}
Mitali Bafna, Jack Murtagh, and Nikhil Vyas.
\newblock Thwarting adversarial examples: An l\_0-robust sparse fourier
  transform.
\newblock In S. Bengio, H. Wallach, H. Larochelle, K. Grauman, N. Cesa-Bianchi,
  and R. Garnett, editors, {\em Advances in Neural Information Processing
  Systems}, volume~31. Curran Associates, Inc., 2018.

\bibitem{SpeedNet}
Sagie Benaim, Ariel Ephrat, Oran Lang, Inbar Mosseri, William~T. Freeman,
  Michael Rubinstein, Michal Irani, and Tali Dekel.
\newblock Speednet: Learning the speediness in videos.
\newblock In {\em Proceedings of the IEEE/CVF Conference on Computer Vision and
  Pattern Recognition (CVPR)}, June 2020.

\bibitem{ther-encoding_fake}
Jacob Buckman, Aurko Roy, Colin Raffel, and Ian~J. Goodfellow.
\newblock Thermometer encoding: One hot way to resist adversarial examples.
\newblock In {\em 6th International Conference on Learning Representations},
  2018.

\bibitem{EvalAdvRob}
Nicholas Carlini, Anish Athalye, Nicolas Papernot, Wieland Brendel, Jonas
  Rauber, Dimitris Tsipras, Ian~J. Goodfellow, Aleksander Madry, and Alexey
  Kurakin.
\newblock On evaluating adversarial robustness.
\newblock {\em CoRR}, abs/1902.06705, 2019.

\bibitem{CW}
Nicholas Carlini and David~A. Wagner.
\newblock Towards evaluating the robustness of neural networks.
\newblock In {\em 2017 {IEEE} Symposium on Security and Privacy}, pages 39--57,
  2017.

\bibitem{cluster_unsupervised}
Mathilde Caron, Ishan Misra, Julien Mairal, Priya Goyal, Piotr Bojanowski, and
  Armand Joulin.
\newblock Unsupervised learning of visual features by contrasting cluster
  assignments.
\newblock 2020.

\bibitem{caron2021unsupervisedcluster}
Mathilde Caron, Ishan Misra, Julien Mairal, Priya Goyal, Piotr Bojanowski, and
  Armand Joulin.
\newblock Unsupervised learning of visual features by contrasting cluster
  assignments, 2021.

\bibitem{chen2020simple}
Ting Chen, Simon Kornblith, Mohammad Norouzi, and Geoffrey Hinton.
\newblock A simple framework for contrastive learning of visual
  representations, 2020.

\bibitem{chen2020mocov2}
Xinlei Chen, Haoqi Fan, Ross Girshick, and Kaiming He.
\newblock Improved baselines with momentum contrastive learning.
\newblock {\em arXiv preprint arXiv:2003.04297}, 2020.

\bibitem{FBA}
F. Croce and M. Hein.
\newblock Minimally distorted adversarial examples with a fast adaptive
  boundary attack.
\newblock In {\em ICML}, 2020.

\bibitem{AA}
Francesco Croce and Matthias Hein.
\newblock Reliable evaluation of adversarial robustness with an ensemble of
  diverse parameter-free attacks.
\newblock In {\em ICML}, 2020.

\bibitem{croce2020reliable}
Francesco Croce and Matthias Hein.
\newblock Reliable evaluation of adversarial robustness with an ensemble of
  diverse parameter-free attacks.
\newblock In {\em ICML}, 2020.

\bibitem{imagenet_cvpr09}
J. Deng, W. Dong, R. Socher, L.-J. Li, K. Li, and L. Fei-Fei.
\newblock {ImageNet: A Large-Scale Hierarchical Image Database}.
\newblock In {\em CVPR09}, 2009.

\bibitem{root_ssl}
Virginia~R. DeSa.
\newblock Learning classification with unlabeled data.
\newblock In {\em Proceedings of the 6th International Conference on Neural
  Information Processing Systems}, NIPS'93, page 112–119, San Francisco, CA,
  USA, 1993. Morgan Kaufmann Publishers Inc.

\bibitem{SAP_fake}
Guneet~S. Dhillon, Kamyar Azizzadenesheli, Zachary~C. Lipton, Jeremy Bernstein,
  Jean Kossaifi, Aran Khanna, and Animashree Anandkumar.
\newblock Stochastic activation pruning for robust adversarial defense.
\newblock In {\em 6th International Conference on Learning Representations},
  2018.

\bibitem{dong2020benchmarking}
Yinpeng Dong, Qi-An Fu, Xiao Yang, Tianyu Pang, Hang Su, Zihao Xiao, and Jun
  Zhu.
\newblock Benchmarking adversarial robustness on image classification.
\newblock In {\em Proceedings of the IEEE/CVF Conference on Computer Vision and
  Pattern Recognition (CVPR)}, pages 321--331, 2020.

\bibitem{mim}
Yinpeng Dong, Fangzhou Liao, Tianyu Pang, Hang Su, Jun Zhu, Xiaolin Hu, and
  Jianguo Li.
\newblock Boosting adversarial attacks with momentum.
\newblock In {\em CVPR}, pages 9185--9193, 2018.

\bibitem{rotation}
Spyros Gidaris, Praveer Singh, and Nikos Komodakis.
\newblock Unsupervised representation learning by predicting image rotations,
  2018.

\bibitem{harnessing}
Ian~J. Goodfellow, Jonathon Shlens, and Christian Szegedy.
\newblock Explaining and harnessing adversarial examples.
\newblock {\em arXiv:1412.6572}, 2014.

\bibitem{grill2020bootstrap}
Jean-Bastien Grill, Florian Strub, Florent Altché, Corentin Tallec, Pierre~H.
  Richemond, Elena Buchatskaya, Carl Doersch, Bernardo~Avila Pires,
  Zhaohan~Daniel Guo, Mohammad~Gheshlaghi Azar, Bilal Piot, Koray Kavukcuoglu,
  Rémi Munos, and Michal Valko.
\newblock Bootstrap your own latent: A new approach to self-supervised
  learning, 2020.

\bibitem{transform-defense_fake}
Chuan Guo, Mayank Rana, Moustapha Ciss{\'{e}}, and Laurens van~der Maaten.
\newblock Countering adversarial images using input transformations.
\newblock {\em CoRR}, abs/1711.00117, 2017.

\bibitem{he2019moco}
Kaiming He, Haoqi Fan, Yuxin Wu, Saining Xie, and Ross Girshick.
\newblock Momentum contrast for unsupervised visual representation learning.
\newblock {\em arXiv preprint arXiv:1911.05722}, 2019.

\bibitem{ResNet}
Kaiming He, Xiangyu Zhang, Shaoqing Ren, and Jian Sun.
\newblock Deep residual learning for image recognition.
\newblock {\em arXiv 1512.03385}, 2015.

\bibitem{preres}
Kaiming He, Xiangyu Zhang, Shaoqing Ren, and Jian Sun.
\newblock Identity mappings in deep residual networks, 2016.

\bibitem{hendrycks2019pretraining}
Dan Hendrycks, Kimin Lee, and Mantas Mazeika.
\newblock Using pre-training can improve model robustness and uncertainty.
\newblock {\em Proceedings of the International Conference on Machine
  Learning}, 2019.

\bibitem{self-supervise-adv-robust}
Dan Hendrycks, Mantas Mazeika, Saurav Kadavath, and Dawn Song.
\newblock Using self-supervised learning can improve model robustness and
  uncertainty.
\newblock {\em Advances in Neural Information Processing Systems (NeurIPS)},
  2019.

\bibitem{jabri2020walk}
Allan Jabri, Andrew Owens, and Alexei~A. Efros.
\newblock Space-time correspondence as a contrastive random walk.
\newblock In {\em Advances in Neural Information Processing Systems}, 2020.

\bibitem{KKT2}
William Karush.
\newblock Minima of functions of several variables with inequalities as side
  constraints.
\newblock {\em M. Sc. Dissertation. Dept. of Mathematics, Univ. of Chicago},
  1939.

\bibitem{kingma2017adam}
Diederik~P. Kingma and Jimmy Ba.
\newblock Adam: A method for stochastic optimization, 2017.

\bibitem{cifar10}
Alex Krizhevsky, Vinod Nair, and Geoffrey Hinton.
\newblock Cifar-10 (canadian institute for advanced research).

\bibitem{cifar100}
Alex Krizhevsky, Vinod Nair, and Geoffrey Hinton.
\newblock Cifar-100 (canadian institute for advanced research).

\bibitem{BIM}
Alexey Kurakin, Ian~J. Goodfellow, and Samy Bengio.
\newblock Adversarial examples in the physical world.
\newblock {\em CoRR}, abs/1607.02533, 2017.

\bibitem{pointpillar}
Alex~H. Lang, Sourabh Vora, Holger Caesar, Lubing Zhou, Jiong Yang, and Oscar
  Beijbom.
\newblock Pointpillars: Fast encoders for object detection from point clouds.
\newblock {\em CoRR}, abs/1812.05784, 2018.

\bibitem{generative_rob_fake}
Yingzhen Li, John Bradshaw, and Yash Sharma.
\newblock Are generative classifiers more robust to adversarial attacks?
\newblock In Kamalika Chaudhuri and Ruslan Salakhutdinov, editors, {\em
  Proceedings of the 36th International Conference on Machine Learning},
  volume~97 of {\em Proceedings of Machine Learning Research}, pages
  3804--3814. PMLR, 09--15 Jun 2019.

\bibitem{madry}
Aleksander Madry, Aleksandar Makelov, Ludwig Schmidt, Dimitris Tsipras, and
  Adrian Vladu.
\newblock Towards deep learning models resistant to adversarial attacks.
\newblock In {\em ICLR}, 2018.

\bibitem{Mao2020MTR}
Chengzhi Mao, Amogh Gupta, Vikram Nitin, Baishakhi Ray, Shuran Song, Junfeng
  Yang, and Carl Vondrick.
\newblock Multitask learning strengthens adversarial robustness.
\newblock In Andrea Vedaldi, Horst Bischof, Thomas Brox, and Jan-Michael Frahm,
  editors, {\em Computer Vision -- ECCV 2020}, pages 158--174, Cham, 2020.
  Springer International Publishing.

\bibitem{TLA}
Chengzhi Mao, Ziyuan Zhong, Junfeng Yang, Carl Vondrick, and Baishakhi Ray.
\newblock Metric learning for adversarial robustness.
\newblock In {\em Advances in Neural Information Processing Systems},
  volume~32. Curran Associates, Inc., 2019.

\bibitem{PIRL}
Ishan Misra and Laurens van~der Maaten.
\newblock Self-supervised learning of pretext-invariant representations, 2019.

\bibitem{moosavidezfooli2016deepfool}
Seyed-Mohsen Moosavi-Dezfooli, Alhussein Fawzi, and Pascal Frossard.
\newblock Deepfool: a simple and accurate method to fool deep neural networks,
  2016.

\bibitem{SVHN}
Yuval Netzer, Tao Wang, Adam Coates, Alessandro Bissacco, Bo Wu, and Andrew~Y.
  Ng.
\newblock Reading digits in natural images with unsupervised feature learning.
\newblock In {\em NIPS Workshop on Deep Learning and Unsupervised Feature
  Learning 2011}, 2011.

\bibitem{jigsaw}
Mehdi Noroozi and Paolo Favaro.
\newblock Unsupervised learning of visual representations by solving jigsaw
  puzzles.
\newblock In Bastian Leibe, Jiri Matas, Nicu Sebe, and Max Welling, editors,
  {\em Computer Vision -- ECCV 2016}, pages 69--84, Cham, 2016. Springer
  International Publishing.

\bibitem{ensemble_diversity_fake}
Tianyu Pang, Kun Xu, Chao Du, Ning Chen, and Jun Zhu.
\newblock Improving adversarial robustness via promoting ensemble diversity.
\newblock In Kamalika Chaudhuri and Ruslan Salakhutdinov, editors, {\em
  Proceedings of the 36th International Conference on Machine Learning},
  volume~97 of {\em Proceedings of Machine Learning Research}, pages
  4970--4979. PMLR, 09--15 Jun 2019.

\bibitem{pang2020bag}
Tianyu Pang, Xiao Yang, Yinpeng Dong, Hang Su, and Jun Zhu.
\newblock Bag of tricks for adversarial training, 2020.

\bibitem{JSMA}
Nicolas Papernot, Patrick~D. McDaniel, Somesh Jha, Matt Fredrikson, Z.~Berkay
  Celik, and Ananthram Swami.
\newblock The limitations of deep learning in adversarial settings.
\newblock {\em arXiv:1511.07528}, 2015.

\bibitem{pytorch}
Adam Paszke, Sam Gross, Francisco Massa, Adam Lerer, James Bradbury, Gregory
  Chanan, Trevor Killeen, Zeming Lin, Natalia Gimelshein, Luca Antiga, Alban
  Desmaison, Andreas Kopf, Edward Yang, Zachary DeVito, Martin Raison, Alykhan
  Tejani, Sasank Chilamkurthy, Benoit Steiner, Lu Fang, Junjie Bai, and Soumith
  Chintala.
\newblock Pytorch: An imperative style, high-performance deep learning library.
\newblock In {\em Advances in Neural Information Processing Systems},
  volume~32. Curran Associates, Inc., 2019.

\bibitem{inpainting}
Deepak Pathak, Philipp Kr{\"{a}}henb{\"{u}}hl, Jeff Donahue, Trevor Darrell,
  and Alexei~A. Efros.
\newblock Context encoders: Feature learning by inpainting.
\newblock {\em CoRR}, abs/1604.07379, 2016.

\bibitem{Pei_2017}
Kexin Pei, Yinzhi Cao, Junfeng Yang, and Suman Jana.
\newblock Deepxplore.
\newblock {\em Proceedings of the 26th Symposium on Operating Systems
  Principles}, Oct 2017.

\bibitem{rice2020overfitting}
Leslie Rice, Eric Wong, and J.~Zico Kolter.
\newblock Overfitting in adversarially robust deep learning, 2020.

\bibitem{odds_odds}
Kevin Roth, Yannic Kilcher, and Thomas Hofmann.
\newblock The odds are odd: A statistical test for detecting adversarial
  examples.
\newblock In Kamalika Chaudhuri and Ruslan Salakhutdinov, editors, {\em
  Proceedings of the 36th International Conference on Machine Learning},
  volume~97 of {\em Proceedings of Machine Learning Research}, pages
  5498--5507. PMLR, 09--15 Jun 2019.

\bibitem{LagrangePenalty}
A.~M. Rubinov, X.~Q. Yang, and Y.~Y. Zhou.
\newblock A lagrange penalty reformulation method for constrained optimization.
\newblock {\em Optimization Letters}, 2007.

\bibitem{DefenseGAN_fake}
Pouya Samangouei, Maya Kabkab, and Rama Chellappa.
\newblock Defense-gan: Protecting classifiers against adversarial attacks using
  generative models.
\newblock {\em CoRR}, abs/1805.06605, 2018.

\bibitem{fano}
Jonathan Scarlett and Volkan Cevher.
\newblock An introductory guide to fano's inequality with applications in
  statistical estimation, 2019.

\bibitem{facenet}
Florian Schroff, Dmitry Kalenichenko, and James Philbin.
\newblock Facenet: {A} unified embedding for face recognition and clustering.
\newblock {\em CoRR}, abs/1503.03832, 2015.

\bibitem{waymood}
Pei Sun, Henrik Kretzschmar, Xerxes Dotiwalla, Aurelien Chouard, Vijaysai
  Patnaik, Paul Tsui, James Guo, Yin Zhou, Yuning Chai, Benjamin Caine, Vijay
  Vasudevan, Wei Han, Jiquan Ngiam, Hang Zhao, Aleksei Timofeev, Scott
  Ettinger, Maxim Krivokon, Amy Gao, Aditya Joshi, Yu Zhang, Jonathon Shlens,
  Zhifeng Chen, and Dragomir Anguelov.
\newblock Scalability in perception for autonomous driving: Waymo open dataset.
\newblock {\em CoRR}, abs/1912.04838, 2019.

\bibitem{sun2020test}
Yu Sun, Xiaolong Wang, Zhuang Liu, John Miller, Alexei Efros, and Moritz Hardt.
\newblock Test-time training with self-supervision for generalization under
  distribution shifts.
\newblock In {\em International Conference on Machine Learning}, pages
  9229--9248. PMLR, 2020.

\bibitem{intriguing}
Christian Szegedy, Wojciech Zaremba, Ilya Sutskever, Joan Bruna, Dumitru Erhan,
  Ian~J. Goodfellow, and Rob Fergus.
\newblock Intriguing properties of neural networks.
\newblock {\em arXiv:1312.6199}, 2013.

\bibitem{adaptive_attacks}
Florian Tramer, Nicholas Carlini, Wieland Brendel, and Aleksander Madry.
\newblock On adaptive attacks to adversarial example defenses.
\newblock In H. Larochelle, M. Ranzato, R. Hadsell, M.~F. Balcan, and H. Lin,
  editors, {\em Advances in Neural Information Processing Systems}, volume~33,
  pages 1633--1645. Curran Associates, Inc., 2020.

\bibitem{odds}
Dimitris Tsipras, Shibani Santurkar, Logan Engstrom, Alexander Turner, and
  Aleksander Madry.
\newblock Robustness may be at odds with accuracy, 2019.

\bibitem{unlabeled}
Jonathan Uesato, Jean{-}Baptiste Alayrac, Po{-}Sen Huang, Robert Stanforth,
  Alhussein Fawzi, and Pushmeet Kohli.
\newblock Are labels required for improving adversarial robustness?
\newblock {\em CoRR}, 2019.

\bibitem{error_correcting_fake}
Gunjan Verma and Ananthram Swami.
\newblock Error correcting output codes improve probability estimation and
  adversarial robustness of deep neural networks.
\newblock In {\em Advances in Neural Information Processing Systems},
  volume~32. Curran Associates, Inc., 2019.

\bibitem{vondrick2018tracking}
Carl Vondrick, Abhinav Shrivastava, Alireza Fathi, Sergio Guadarrama, and Kevin
  Murphy.
\newblock Tracking emerges by colorizing videos, 2018.

\bibitem{MART}
Yisen Wang, Difan Zou, Jinfeng Yi, James Bailey, Xingjun Ma, and Quanquan Gu.
\newblock Improving adversarial robustness requires revisiting misclassified
  examples.
\newblock In {\em ICLR}, 2020.

\bibitem{wong2020fast}
Eric Wong, Leslie Rice, and J.~Zico Kolter.
\newblock Fast is better than free: Revisiting adversarial training, 2020.

\bibitem{AWP}
Dongxian Wu, Shu-Tao Xia, and Yisen Wang.
\newblock Adversarial weight perturbation helps robust generalization.
\newblock In {\em NeurIPS}, 2020.

\bibitem{kwinner_fake}
Chang Xiao, Peilin Zhong, and Changxi Zheng.
\newblock Enhancing adversarial defense by k-winners-take-all, 2019.

\bibitem{yin2021reversible}
Zhaoxia Yin, Hua Wang, Li Chen, Jie Wang, and Weiming Zhang.
\newblock Reversible adversarial example based on reversible image
  transformation, 2021.

\bibitem{TurnWeak_fake}
Tao Yu, Shengyuan Hu, Chuan Guo, Weilun Chao, and Kilian Weinberger.
\newblock A new defense against adversarial images: Turning a weakness into a
  strength.
\newblock In {\em Proceedings of the 33rd Conference on Neural Information
  Processing Systems (NeurIPS 2019)}, Oct. 2019.

\bibitem{WRN}
Sergey Zagoruyko and Nikos Komodakis.
\newblock Wide residual networks.
\newblock {\em CoRR}, abs/1605.07146, 2016.

\bibitem{TRADES}
Hongyang Zhang, Yaodong Yu, Jiantao Jiao, Eric~P. Xing, Laurent~El Ghaoui, and
  Michael~I. Jordan.
\newblock Theoretically principled trade-off between robustness and accuracy.
\newblock {\em arXiv abs/1901.08573}, 2019.

\bibitem{zhang2016colorful}
Richard Zhang, Phillip Isola, and Alexei~A. Efros.
\newblock Colorful image colorization, 2016.

\end{thebibliography}
\bibliographystyle{plain}



\end{subappendices}


\end{document}